\makeatletter
\@ifundefined{iffull}{%
	\newif\iffull
	\fulltrue
}{}
\@ifundefined{ifshort}{%
	\newif\ifshort
	\shortfalse
}{}
\makeatother

\iffull
\documentclass{article}
\usepackage[utf8]{inputenc}
\usepackage{amsfonts}
\usepackage{amsthm}
\usepackage{amsmath}
\usepackage{xcolor}
\usepackage{amssymb}
\usepackage{hyperref,fullpage}

\usepackage[style=alphabetic,backend=bibtex,maxbibnames=20,maxcitenames=6,giveninits=true,doi=false,url=false]{biblatex}
\newcommand*{\citet}[1]{\AtNextCite{\AtEachCitekey{\defcounter{maxnames}{2}}} \textcite{#1}}
\newcommand*{\citetall}[1]{\AtNextCite{\AtEachCitekey{\defcounter{maxnames}{999}}} \textcite{#1}}
\newcommand*{\citep}[1]{\cite{#1}}

\bibliography{vf-allrefs-local,adaptive}

\theoremstyle{plain}
\newtheorem{theorem}{Theorem}
\newtheorem*{theorem*}{Theorem}
\newtheorem{lemma}{Lemma}
\newtheorem{definition}{Definition}

\newtheorem{corollary}{Corollary}

\else

\documentclass[sigconf,screen]{acmart}
%%% The following is specific to STOC '20 and the paper
%%% 'Interaction Is Necessary for Distributed Learning with Privacy or Communication Constraints'
%%% by Yuval Dagan and Vitaly Feldman.
%%%
\setcopyright{rightsretained}
\acmPrice{}
\acmDOI{10.1145/3357713.3384315}
\acmYear{2020}
\copyrightyear{2020}
\acmSubmissionID{stoc20main-p646-p}
\acmISBN{978-1-4503-6979-4/20/06}
\acmConference[STOC '20]{Proceedings of the 52nd Annual ACM SIGACT Symposium on Theory of Computing}{June 22--26, 2020}{Chicago, IL, USA}
\acmBooktitle{Proceedings of the 52nd Annual ACM SIGACT Symposium on Theory of Computing (STOC '20), June 22--26, 2020, Chicago, IL, USA}

\newcommand{\citetall}[1]{\citet*{#1}}

\fi

\usepackage{dsfont}

\newtheorem{claim}{Claim}

\DeclareMathOperator*{\E}{\mathbb{E}}

\renewcommand{\eqref}[1]{Eq.~(\ref{#1})}
\renewcommand{\tilde}{\widetilde}
\renewcommand{\hat}{\widehat}
\newcommand{\err}{\mathrm{err}}
\newcommand{\eps}{\epsilon}
\newcommand{\poly}{\mathrm{poly}}
\newcommand{\dtv}{d_{\mathrm{TV}}}
\newcommand{\STAT}{\mbox{STAT}}
\newcommand{\LR}{\mathrm{LR}}
\newcommand{\COMM}{\mathrm{COMM}}

\newcommand{\remove}[1]{}

\newcommand{\sgn}{\mathrm{sign}}
\newcommand{\A}{\mathcal{A}}
\newcommand{\K}{\mathcal{K}}

\newcommand{\R}{\mathbb{R}}
\newcommand{\supp}{\mathrm{supp}}
\newcommand{\unif}{\mathrm{Unif}}
\newcommand{\tg}{\tilde{\gamma}}

\usepackage{vfmacros}

\newif\ifnotes
\notesfalse

\ifnotes
\newcommand{\yuval}[1]{\textcolor{red}{\textbf{Yuval: {#1}}}}
\newcommand{\vitaly}[1]{\textcolor{blue}{\textbf{Vitaly: {#1}}}}
\else
\newcommand{\yuval}[1]{}
\newcommand{\vitaly}[1]{}
\fi

\iffull
\author{Yuval Dagan\footnotemark[1]\thanks{* Part of the work was done while the author was at Google Research.} \\ MIT
	\and Vitaly Feldman \\ Google Research
}
\title{Interaction is Necessary for Distributed Learning with Privacy or Communication Constraints}
\else
\author{Yuval Dagan}
\affiliation{%
\institution{EECS, MIT\footnotemark[1]\thanks{$*$ Part of the work was done while the author was at Google Research.}}
 \city{Cambridge}
 \state{MA}
 \country{USA}
}
\email{dagan@mit.edu}
\author{Vitaly Feldman}
\affiliation{%
\institution{Google Research\footnotemark[2]\thanks{$\dagger$ Now at Apple, Cupertino, CA, USA}}
 \city{Mountain View}
 \state{CA}
 \country{USA}
}
\email{vitaly.edu@gmail.com}

\title[Interaction Is Necessary for Distributed Learning with Privacy or Communication Constraints]{Interaction Is Necessary for Distributed Learning with Privacy or Communication Constraints}

\begin{CCSXML}
<ccs2012>
   <concept>
       <concept_id>10003752.10010070.10010071.10010072</concept_id>
       <concept_desc>Theory of computation~Sample complexity and generalization bounds</concept_desc>
       <concept_significance>500</concept_significance>
       </concept>
   <concept>
       <concept_id>10003752.10003777.10003780</concept_id>
       <concept_desc>Theory of computation~Communication complexity</concept_desc>
       <concept_significance>300</concept_significance>
       </concept>
   <concept>
       <concept_id>10003752.10010070.10010071.10010073</concept_id>
       <concept_desc>Theory of computation~Boolean function learning</concept_desc>
       <concept_significance>500</concept_significance>
       </concept>
   <concept>
       <concept_id>10003752.10003753.10003759</concept_id>
       <concept_desc>Theory of computation~Interactive computation</concept_desc>
       <concept_significance>300</concept_significance>
       </concept>
 </ccs2012>
\end{CCSXML}

\ccsdesc[500]{Theory of computation~Sample complexity and generalization bounds}
\ccsdesc[300]{Theory of computation~Communication complexity}
\ccsdesc[500]{Theory of computation~Boolean function learning}
\ccsdesc[300]{Theory of computation~Interactive computation}

\keywords{Local Differential Privacy, Distributed Learning, Interactive Protocol, Communication-constrained Learning, Statistical Query}
\fi

\begin{document}
\date{}

\iffull
\maketitle
\fi
\begin{abstract}
	Local differential privacy (LDP) is a model where users send privatized data to an untrusted central server whose goal it to solve some data analysis task. In the non-interactive version of this model the protocol consists of a single round in which a server sends requests to all users then receives their responses. This version is deployed in industry due to its practical advantages and has attracted significant research interest.

	Our main result is an exponential lower bound on the number of samples necessary to solve the standard task of learning a large-margin linear separator in the non-interactive LDP model. Via a standard reduction this lower bound implies an exponential lower bound for stochastic convex optimization and specifically, for learning linear models with a convex, Lipschitz and smooth loss. These results answer the questions posed by Smith, Thakurta, and Upadhyay (IEEE Symposium on Security and Privacy 2017) and Daniely and Feldman (NeurIPS 2019). Our lower bound relies on a new technique for constructing pairs of distributions with nearly matching moments but whose supports can be nearly separated by a large margin hyperplane. These lower bounds also hold in the model where communication from each user is limited and follow from a lower bound on learning using non-adaptive \emph{statistical queries}.
\end{abstract}
%\clearpage
%\pagenumbering{arabic}

\iffull
\else
\maketitle
\fi

\section{Introduction}
The primary model we study is distributed learning with the constraint of local differential privacy (LDP) \citep{Warner,EvfimievskiGS03,KasiviswanathanLNRS11}. In this model each client (or user) holds an individual data point and a server can communicate with the clients. The goal of the server is to perform some statistical analysis on the data stored at the clients. In addition, the server is not trusted and the communication should not reveal significant private information about the users' data. Specifically, the entire protocol needs to satisfy differential privacy \citep{DworkMNS:06}. In the general version of the model, the executed protocol can involve an arbitrary number of rounds of interaction between the server and the clients.
In practice, however, network latencies significantly limit the number of rounds of interaction that can be executed. Indeed, currently deployed systems that use local differential privacy are non-interactive \citep{ErlingssonPK14,appledp,DKY17-Microsoft}. Namely, the server sends each client a request; based on the request each client runs some differentially private algorithm on its data and sends a response back to the server. The server then analyzes the data it received (without further communication with the clients).

This motivates the question: which problems can be solved by non-interactive LDP protocols?
This question was first formally addressed by \citetall{KasiviswanathanLNRS11} who also established an equivalence, up to polynomial factors, between algorithms in the statistical query (SQ) framework of \citet{Kearns:98} and LDP protocols\footnote{More formally, the equivalence is for a more restricted way to measure privacy based on composition of the privacy parameters of each message sent by a user.}. In this equivalence, non-interactive protocols correspond to non-adaptive SQ algorithms. Unfortunately, most SQ learning algorithms are adaptive and thus, for most problems, this equivalence only gives interactive LDP protocols. Using this equivalence, \citet{KasiviswanathanLNRS11} also constructed an artificial learning problem which requires an exponentially larger number of samples to solve by any non-interactive LDP protocol than it does when interaction is allowed.

Motivated by the industrial applications of the LDP model, \citetall{SmithTU17} studied the complexity of solving stochastic convex loss minimization problems by non-interactive LDP algorithms. In these problems we are given a family of loss functions $\{\ell(w;z)\}_{z\in Z}$ convex in $w$ and a convex body $\K \subseteq \R^d$.
For a distribution $P$ over $Z$ the goal is to find an approximate minimizer of
\[
\ell(w;P) := \E_{z \sim P} \ell(w; z).
\]
over $w\in \K$.
They gave a non-interactive LDP algorithm that uses an exponential in $d$ number of samples. Additionally, they showed that such dependence is unavoidable for the commonly used optimization algorithms whose queries rely solely on the information in the neighborhood of the query point $w$ (such as gradients or Hessians). Their bounds have been strengthened and generalized in a number of subsequent works \citep{DuchiRY18,WoodworthWSMS18,BalkanskiSinger18,DiakonikolasGuzman18,WangGX18,bubeck2019complexity} but the question of whether a non-interactive LDP protocol for optimizing convex functions with polynomial sample complexity exists remained open.

A recent work of \citet{DanielyF18} shows that there exist natural learning problems that are exponentially harder to solve by LDP protocols without interaction. Specifically, they consider PAC learning a class $C$ of Boolean functions over a domain $X$. A PAC learning algorithm for $C$ receives i.i.d.~samples $(x,f^*(x))$ where $x$ is drawn from an unknown distribution $D$ and $f^* \colon X \to \{-1,1\}$, and its goal is to find $\hat{f} \colon X \to \{-1,1\}$ which achieves a \emph{classification error} of at most $\alpha$, namely
\[
\err_{f^*,D}(\hat f) \doteq \Pr_{x\sim D}[f^\ast(x)\neq \hat f(x)] \le \alpha.
\]
\citet{DanielyF18} show that the number of samples required by any non-interactive LDP protocol to learn $C$ with a non-trivial error is lower bounded by a polynomial in the margin complexity of $C$. The margin of a linear separator $f$ over $S \subseteq \R^d$ captures how well the points $x$ with $f(x) = 1$ are separated from those with $f(x) = -1$, and is formally defined as
\begin{equation} \label{eq:def-margin}
\gamma(f, S) \doteq \sup_{ w\ne \mathbf{0}} \inf_{x \in S} f(x) \frac{ \langle x, w\rangle }{\|x\|_2\|w\|_2}.
\end{equation}
The margin complexity of $C$ is the inverse of the largest margin $\gamma$ that can be achieved by embedding $X$ into $\R^d$ such that every $f\in C$ can be realized as a linear separator with margin at least $\gamma$. It is a well-studied notion within learning theory and communication complexity, measuring the complexity of Boolean function classes and their corresponding sign matrices in  (e.g.~\citep{Novikoff:62,AizermanBR67,BoserGV92,ForsterSS01,Ben-DavidES02,Sherstov:08,LinialS:2009,KallweitSimon:11}). There exist known classes of functions, such as decision lists and general linear separators, that are PAC learnable by (interactive) SQ algorithms but have exponentially large margin complexity. Thus, non-interactive LDP protocols require an exponentially larger number of samples for PAC learning such classes than interactive ones. This result also leads to the question of whether all classes with inverse polynomial margin complexity can be learned efficiently non-interactively (see \citep{DanielyF19:open} for a more detailed discussion). Such large-margin linear classifiers are much more common in practice and are significantly easier to learn than general linear separators. For example, a simple Perceptron algorithm can be used instead of the more involved algorithms like the Ellipsoid method that are used when the margin is exponentially small.

\subsection{Our Results}

We show that both learning large-margin linear separators and learning of linear models with a convex loss require an exponential number of samples in the non-interactive LDP model.
Formally, we define the margin relative to a distribution on $\R^d$ as the margin relative to the support of the distribution: $\gamma(f, D) \doteq \gamma(f,\supp(D))$.
We give the following lower bound for learning large-margin linear classifiers.
\begin{theorem} \label{thm:informal-privacy}
Fix $\eps >0, \gamma \in (0,1/4]$, $r \in (0,1)$ and $d \ge \gamma^{-2-2r/5}$. Let $\A$ be a randomized, non-interactive $\epsilon$-LDP learning algorithm over $X = \{-1,1\}^d$ using $n$ samples. Assume that for any linear separator $f^*$ and distribution $D$ over $X$ with margin $\gamma(f^*,D) \ge \gamma$, $\A$ outputs a hypothesis $\hat f$ with an expected error of $\E_\A[\err_{f^*,D}(\hat f)] \le 1/2-\gamma^{1-r}$. Then, $n \ge \exp(C\gamma^{-2r/5})/e^{2\epsilon}$, where $C > 0$ depends only on $r$.
\end{theorem}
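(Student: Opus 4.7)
\emph{Proof plan.}
The plan is to derive the lower bound by reducing non-interactive $\eps$-LDP learning to non-adaptive statistical query (SQ) learning and then establishing the latter via a new moment-matching construction of hard distributions. First, the standard simulation of non-interactive $\eps$-LDP protocols by non-adaptive SQ algorithms (following \citetall{KasiviswanathanLNRS11}) shows that an $n$-sample non-interactive $\eps$-LDP learner can be emulated by a non-adaptive SQ learner whose (number of queries) $\cdot\, \tau^{-2}$ complexity is $O(n\, e^{2\eps})$. It therefore suffices to prove that any non-adaptive SQ learner that achieves expected error at most $1/2 - \gamma^{1-r}$ on the class of $\gamma$-margin linear separators over $\{-1,1\}^d$ must have complexity at least $\exp(C \gamma^{-2r/5})$; the theorem then follows by the contrapositive of the reduction.

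For the SQ lower bound I would construct a hard family $\{(f_w, D_w)\}_{w \in W}$ of concept-distribution pairs indexed by a hidden direction $w$. Each distribution $D_w$ is supported inside $\{x : |\langle w, x\rangle| \ge \gamma \|w\|\sqrt d\}$, so that $f_w(x) = \sgn(\langle w, x\rangle)$ separates it with margin at least $\gamma$. The crucial step is to choose the conditional distributions $\tp_w = D_w \mid_{f_w=+1}$ and $\tq_w = D_w \mid_{f_w=-1}$ so that their moments of order at most $k$, with $k = \Theta(\gamma^{-2r/5})$, coincide with those of a $w$-independent reference distribution on $\{-1,1\}^d$ (e.g. the uniform distribution), while their supports still lie on opposite sides of $\{\langle w, x\rangle = 0\}$ with margin $\gamma$. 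As the abstract indicates, this requires a new construction: off-the-shelf moment-matching gadgets (Chebyshev-weighted atoms on the line, spherical designs on the sphere) do not live on a cube slice separated by a large-margin hyperplane, so realizing both properties with the quantitative trade-off between $k$ and $\gamma$ is the heart of the argument.

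Granted such $(\tp_w, \tq_w)$, the SQ lower bound follows from a standard Fourier argument. For any query $q : \{-1,1\}^d \times \{-1,1\} \to [-1,1]$, Fourier-expand $q(\cdot,\pm 1)$ on the Boolean cube and split into the degree-$\le k$ part and its tail. The low-degree part of the SQ response under $(f_w, D_w)$ is independent of $w$ by moment matching; the tail contributes to the variance of the response across random $w \in W$ only through its high-degree Fourier mass, which is at most $2^{-\Omega(k)}$ after averaging by Parseval. Combining this with a union bound over the non-adaptively chosen queries shows that unless the complexity $n' \tau^{-2}$ exceeds $2^{\Omega(k)}$, the protocol's transcript is statistically close to a $w$-independent distribution, whereupon a Fano-style argument rules out any learner with expected error below $1/2 - \gamma^{1-r}$.

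The main obstacle is thus the construction of the moment-matched pair $(\tp_w, \tq_w)$ with near-maximal margin on $\{-1,1\}^d$; once this geometric/combinatorial object is in hand with the right parameters, the rest of the proof — the LDP-to-SQ reduction, the Fourier tail bound, and the standard information-theoretic step — follows along relatively routine lines.
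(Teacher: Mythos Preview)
Your high-level architecture---reduce non-interactive $\eps$-LDP to non-adaptive SQ, build a moment-matched family of large-margin instances, and prove indistinguishability by a Fourier/second-moment argument over a random hidden parameter---is the paper's, but two of your concrete choices break. First, matching $\tp_w$ and $\tq_w$ to a $w$-independent reference such as the uniform distribution is impossible: any distribution supported on $\{x:\langle w,x\rangle\ge\gamma\|w\|\sqrt d\}$ has $\E[\langle w,x\rangle]\ge\gamma\|w\|\sqrt d>0$, so it cannot agree with the uniform distribution (or with any distribution supported on the opposite halfspace) even in the first moment. The paper does not match to a reference. It builds a single pair $P_1,P_{-1}$ on $\{-1,1\}^d$ whose Fourier coefficients are close to \emph{each other}, and from it two labeled distributions $(D_{a,0},f_{a,0})$ and $(D_{a,1},f_{a,1})$ on $\{-1,1\}^{2d}$ (with $f_{a,b}$ a signed majority on one half of the coordinates) that are SQ-indistinguishable yet satisfy $f_{a,0}(x)\ne f_{a,1}(x)$ for almost every $x$. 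The lower bound comes from the inability to tell, for a random sign-shift $a\in\{-1,1\}^{2d}$, which of the two is the truth---not from a Fano argument over many directions $w$.

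Second, your tail claim is not free. A statistical query can place all of its Fourier mass above degree $k$, and a distribution concentrated on a thin slice of the cube has large high-degree coefficients, so ``the tail contributes $\le 2^{-\Omega(k)}$ by Parseval'' does not follow from low-degree moment matching. In the paper's second-moment computation over random $a$ one gets $\E_a(h(P_a)-h(Q_a))^2=\sum_S\hat h(S)^2(\hat P_1(S)-\hat P_{-1}(S))^2$, so what is actually needed is $|\hat P_1(S)-\hat P_{-1}(S)|\le e^{-ck}$ for \emph{every} $S$, not just $|S|\le k$. The paper obtains this with an additional idea: generate $P_1,P_{-1}$ by first drawing a bias $p$ from one-dimensional distributions $P',Q'$ supported in $[-1/2,1/2]$ and then sampling i.i.d.\ $\pm1$ bits with mean $p$, so that $\hat P_1(S)=\E_{P'}[p^{|S|}]$. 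For $|S|\le 2k$ these agree by the classical moment-problem construction (via the orthogonal polynomials of a tilted exponential); for $|S|>k$ both coefficients are at most $2^{-|S|}$ simply because $|p|\le 1/2$. This rescaling is precisely what makes the high-degree contribution small, and it is not a consequence of moment matching alone.
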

In particular, this lower bound is always exponential either in the margin or in the dimension of the problem. Note that linear separators with margin $\gamma$ can be learned with error $\alpha$ by an $\eps$-LDP algorithm with $O(1/\gamma^2)$ rounds of interaction and using $\poly(1/(\eps\alpha\gamma))$ samples. This can be done by using a standard SQ implementation of the Perceptron algorithm \citep{BlumFKV:97,FeldmanGV:15} (after a random projection to remove the dependence on the dimension) or via a reduction to convex loss minimization described below together with an LDP algorithm for convex optimization from \citep{DuchiJW:13focs}.
Our lower bound is also essentially tight in terms of the achievable error. There exist an efficient non-interactive algorithm achieving an error of $1/2-\Theta(\gamma)$, while $1/2-\gamma^{1-r}$ is impossible for all $r>0$. %For an upper bound, it is possible learn non-adaptively with $n=\exp(O(\gamma^{-2r}))$. \yuval{see if I have time to add explanation}
\vitaly{removed the claim of an upper bound since we don't have it.}

\paragraph{Proof technique:}
As in the prior work \citep{KasiviswanathanLNRS11,DanielyF18}, we exploit the connection to statistical query algorithms. Here, we assume a distribution $P$ over $Z = X\times Y$ and instead of i.i.d.~samples from $P$, an SQ algorithm has access to an SQ oracle for $P$. Given a query function $h\colon Z \to [-1,1]$ an SQ oracle for $P$ with tolerance parameter $\tau$ returns the value $\E_{z \sim P}[h(z)]$ with some added noise of magnitude bounded by $\tau$ \citep{Kearns:98}. Such an algorithm is non-adaptive if its queries do not depend on the answers to prior queries. Our lower bound is effectively a lower bound against non-adaptive statistical query algorithms together with the known simulation of a non-interactive LDP protocol by a non-adaptive SQ algorithm \citep{KasiviswanathanLNRS11}. The SQ model captures a broad class of learning algorithms and thus our lower bound can be viewed as showing the importance of interactive access to data beyond the distributed learning setting.

Our lower bound for non-adaptive SQ algorithms is based on a new technique for constructing hard to distinguish pairs of distributions over data. The key technical element of this construction is a pair of distributions over $\{-1,1\}^d$ that have nearly matching moments but whose supports are nearly linearly separable with significant margin. To design such distributions we rely on tools from the classical moment problem.

\paragraph{Convex loss optimization of linear models:}
We now spell out the implications of our lower bound in Theorem \ref{thm:informal-privacy} for stochastic convex optimization. Our lower bounds will apply to optimization of the simple class of \emph{convex linear models}. These models are defined by some loss function $\ell(w, (x,y)) = \varphi(\langle w,x\rangle, y)$ for some $\varphi$ that is convex in the first parameter for every $y$. In our reduction the label is in $\{-1,1\}$ and the loss function can be further simplified as $\ell(w; (x,y)) = \varphi(y \langle w,x\rangle )$ for a fixed convex function $\varphi \colon [-1,1] \to \mathbb{R}$. In our reduction $w$ and $x$ are in $B_d$, the unit ball of $\mathbb{R}^d$.
We show that there exists $L$-Lipschitz, $\sigma$-smooth and $\mu$-strongly convex $\varphi$ such that the following lower bound holds.
\begin{theorem}
\label{thm:convex-intro}
	For any parameters $0 \le \mu < \sigma \le \infty$, $L > 0$ and $\alpha > 0$, there exists a loss function $\ell(w,(x,y)) = \varphi(y \langle w,x\rangle)$ where $\varphi$ is convex, $L$-Lipschitz, $\sigma$-smooth and $\mu$-strongly convex, such that any non-interactive $\epsilon$-LDP algorithm $\A$ that outputs $\hat{w}$ satisfying $\E_\A[\ell(\hat{w},P)] \le \inf_{w \in B_d} \E[\ell(w,P)] + \alpha$, requires
	\[
	n \ge
	\min\left(
	\exp\left(c d^{0.16}\right),
	\exp\left(
	c \left(\frac{\min(L,\sigma)}{\max(\mu,\alpha)}
	\right)^{0.19}
	\right)
	\right),
	\]
	samples, where $c > 0$ is a universal constant.
\end{theorem}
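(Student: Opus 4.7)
The plan is to deduce Theorem~\ref{thm:convex-intro} by reducing the non-interactive LDP large-margin classification problem of Theorem~\ref{thm:informal-privacy} to stochastic convex optimization of a suitable linear-model loss. Given an instance $(D, f^\ast)$ over $\{-1,1\}^d$ with margin $\gamma$, I define the induced distribution $P$ on $B_d \times \{-1,1\}$ by drawing $x \sim D$ and outputting $(\tilde x, y) := (x/\sqrt d,\, f^\ast(x))$; since $\|x\|_2 = \sqrt d$ on the hypercube, $\tilde x \in B_d$, and the max-margin direction $u$ is a unit vector with $y\langle u, \tilde x\rangle \ge \gamma$ almost surely. Composing any non-interactive $\epsilon$-LDP optimization algorithm with the sign classifier $\hat f(x) := \sgn\langle \hat w, \tilde x\rangle$ yields a non-interactive $\epsilon$-LDP classifier for $(D, f^\ast)$.

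For the loss I would take a Huberized margin-hinge plus a small strongly convex perturbation: $\varphi(t)$ equals $\frac{\sigma-\mu}{2}(\gamma-t)^2 + \frac{\mu}{2}(1-t)^2$ on an interval $[\gamma - (L-2\mu)/(\sigma-\mu),\,\gamma]$, is extended by an affine tangent of slope $\approx -L$ on the left, and by $\frac{\mu}{2}(1-t)^2$ on $[\gamma,\infty)$. For $\sigma > \mu$ and $L > 2\mu$ this $\varphi$ is $L$-Lipschitz, $\sigma$-smooth, $\mu$-strongly convex, non-negative, and non-increasing on $[-1,1]$, with $\ell(u; P) \le \mu/2$ and $\varphi(0) = \Theta\bigl(\min(\sigma\gamma^2,\, L\gamma) + \mu\bigr)$. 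The two summands quantify, respectively, how sharply $\varphi$ penalizes violations of the margin and how strongly it enforces $\mu$-convexity.

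The central step is a one-line Markov argument: because $\varphi \ge 0$ is non-increasing, one has $\mathbf{1}\{y\langle \hat w,\tilde x\rangle \le 0\} \le \varphi(y\langle \hat w,\tilde x\rangle)/\varphi(0)$, so for any $\alpha$-optimal $\hat w$,
\[
\Pr_P\bigl[\hat f(x) \ne y\bigr]
\;\le\; \frac{\ell(\hat w; P)}{\varphi(0)}
\;\le\; \frac{\mu/2 + \alpha}{\varphi(0)}.
\]
Setting $\gamma = \Theta\bigl(\sqrt{\max(\mu,\alpha)/\min(L,\sigma)}\bigr)$ balances the two Huber regimes (Case~1, $L \ge \sqrt{\sigma\,\max(\mu,\alpha)}$: the quadratic piece reaches $0$ and $\varphi(0) \asymp \sigma\gamma^2$; Case~2, $L < \sqrt{\sigma\,\max(\mu,\alpha)}$: the linear piece is active at $0$ and $\varphi(0) \asymp L\gamma$) and guarantees $\varphi(0) \ge C\,\max(\mu,\alpha)$ for a large absolute constant $C$, hence $\hat f$ has classification error well below $1/4$.

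Finally I invoke Theorem~\ref{thm:informal-privacy} with $r$ close to $1$ (concretely $r \approx 0.95$), chosen so that $\gamma^{1-r} \le 1/4$ and the Markov error bound contradicts the theorem's hypothesis. Substituting $\gamma$ into $n \ge \exp(c\gamma^{-2r/5})$ yields $\exp\bigl(c\,(\min(L,\sigma)/\max(\mu,\alpha))^{r/5}\bigr)$, matching the claimed $0.19$ exponent. When the dimensional constraint $d \ge \gamma^{-2-2r/5}$ fails, I take the largest admissible $\gamma = d^{-1/(2+2r/5)}$ instead, obtaining $\exp(cd^{r/(5+r)}) \ge \exp(cd^{0.16})$; the minimum of the two bounds yields the theorem. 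The main obstacle is bookkeeping: verifying that the perturbed Huber construction realizes the prescribed $(L,\sigma,\mu)$ triple uniformly across both regimes, that the bound $\varphi(0) \ge C\max(\mu,\alpha)$ translates into the claimed $\min(L,\sigma)/\max(\mu,\alpha)$ dependence in each case, and that the non-increasingness needed for the Markov step survives adding the $\mu$-perturbation.
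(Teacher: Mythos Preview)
Your proposal is correct and follows essentially the same route as the paper: reduce large-margin classification to optimizing a Huber-smoothed hinge plus a small quadratic (the paper uses $\varphi_\gamma(t)=\tfrac{(1-t)^2}{8}+\text{[smoothed hinge]}$ and then scales by $\theta=\max(\mu,\alpha)$, whereas you build $L,\sigma,\mu$ directly into $\varphi$), apply the Markov bound $\err_P(\hat w)\le \ell(\hat w;P)/\varphi(0)$, choose $\gamma$ to balance the parameters, and invoke Theorem~\ref{thm:informal-privacy}. The only substantive difference is that the paper keeps the $L$ and $\sigma$ constraints separate via $\gamma=\max(\theta/L,(\theta/\sigma)^{1/2},d^{-5/12})$, yielding the slightly sharper exponents $2/5$ and $1/5$ respectively, while your unified choice $\gamma\asymp\sqrt{\theta/\min(L,\sigma)}$ already matches the $0.19$ exponent stated in Theorem~\ref{thm:convex-intro}; note that to get both $r/5\ge 0.19$ and $r/(5+r)\ge 0.16$ simultaneously you need $r\gtrsim 0.953$ rather than $0.95$.
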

This implies that with $1$-Lipschitzness and $1$-smoothness, the sample complexity is exponential either in $d$ or in $1/\alpha$, and if we add the assumption of $\mu$-strong convexity, the sample complexity can be exponential in $\kappa\doteq\sigma/\mu$. For comparison, for general convex functions the only known upper bounds are exponential in the dimension \citep{SmithTU17,WangGX18}. For linear models, by polynomial approximation it is possible to obtain bounds without an exponential dependence in the dimension: for example, \citet{ZhengML17collect} showed that logistic regression can be  solved with roughly $n=\alpha^{-O(\log\log(1/\alpha))}$ samples and \citet{WangSX19} study general linear models. \footnote{The bound stated by  \citet{WangSX19} is $n=\alpha^{-O(\log\log(1/\alpha))}$ for arbitrary 1-Lipschitz losses, contradicting the lower bound in Thm.~\ref{thm:convex-intro}. The authors have confirmed a mistake in their analysis and are working on correcting the bound \citep{SmithWang19pc}.}.
Efficient non-interactive LDP algorithms exist for least squares linear regression \citep{SmithTU17} and principal component analysis \cite{Wang2019principal} since for these tasks low order statistics suffice for finding a solution. 

\ifshort
\else
\paragraph{Communication constrained setting:}
An additional benefit of proving the lower bound via statistical queries is that we can extend our results to other models known to be related to statistical queries. In particular, we consider distributed protocols in which only a small number of bits is communicated from each client. Namely, each client applies a function with range $\{0,1\}^\ell$ to their input and sends the result to the server (for some $\ell \ll \log |Z|$). As the server only has to communicate a random seed which is practically small and can provably be compressed to $O(\log \log |Z| + \log n)$ bits, this model is useful when the communication cost is high and the complete sample $z \in Z$ is expensive to send, for example, when its dimension is large.
In the context of learning this model was introduced by \citet{Ben-DavidD98} and generalized by \citet{SteinhardtVW16}. Identical and closely related models are often studied in the context of distributed statistical estimation with communication constraints  (e.g.~\citep{luo2005universal,rajagopal2006universal,ribeiro2006bandwidth,ZhangDJW13,SteinhardtD15,suresh2016distributed,acharya2018inference, acharya2019hadamard, acharya2019distributed,acharya2019inference}).
As in the setting of LDP, the number of rounds of interaction that the server uses to solve a learning problem is a critical resource. Using the equivalence between this model and SQ learning we immediately obtain analogous lower bounds for this model. In particular, we show that either $\ell \ge \Omega(\gamma^{-0.39})$ or $n \ge \exp(\Omega(\gamma^{0.39}))$ is required for learning non-interactively. See Section~\ref{sec:low-comm} for additional details.
\fi
\paragraph{Future work:}
Our work provides nearly tight lower bounds for learning by non-interactive or one-round LDP protocols. An important question left open is whether linear classification and convex optimization can be solved by algorithms using a small number of rounds of interaction in the above models. Such lower bounds are not known even for the harder problem considered in \citep{DanielyF18}. In contrast, known techniques for solving these problems require a polynomial number of rounds (see \citep{SmithTU17} for a discussion).  We hope that the construction in this paper will provide a useful step toward lower bounds against multi-round SQ or LDP algorithms. We remark, however, that general multi-round LDP protocols can be stronger than statistical query algorithms \citep{JosephMR19} and thus may require an entirely different approach.

\remove{

local privacy and constrained communication using a standard reduction . As the above number of iterations is large, it would be interesting to try and close the gap between the bounds.  Still, we hope that the construction in this paper could be adapted to the more general multi-round LDP model.

Multi-round statistical query lower bounds could be interesting on their own, as they will give evidence for the hardness of solving machine learning algorithms non-iteratively: while all efficient algorithms known today for solving the above tasks require a large number of iterations\footnote{If the optimized function is well conditioned then it can be optimized with a few iterations, but the dependence in the condition number is still polynomial} and cannot be solved fast in parallel, no lower-bound limitation is known. Since most learning algorithms known today and basically all algorithms that are applied for practical learning tasks, can be implemented using statistical queries, lower bounds in this model will shed light on those limitations.

 Interestingly so, in an adaptive model, the best known round complexity for efficient non-adaptive algorithms is approximately $\tilde{O}(\poly(\min(d, 1/\alpha,\kappa)))$, obtained by adapting gradient based methods \citep{SmithTU17}.
}

\subsection{Related Work}
Most positive results for non-interactive LDP model concern relatively simple data analysis tasks, such as computing counts and histograms (e.g.~\citep{HsuKR12,ErlingssonPK14,BassilyS15,BunNS18,ErlingssonFMRTT18}). Efficient non-interactive algorithms for learning large-margin classifiers and convex linear models can be obtained given access to public unlabeled data \cite{DanielyF18,WangZGX2019}. A number of lower bounds on the sample complexity of LDP algorithms demonstrate that (non-interactive) LDP protocols are less efficient than the central model of differential privacy \citep{KasiviswanathanLNRS11,DuchiWJ13:nips,Ullman18,duchi2019lower}.

Joseph et al.~\citep{JosephMNR:19,JosephMR19} explore a different aspect of interactivity in LDP. Specifically, they distinguish between two types of interactive protocols: fully-interactive and sequentially-interactive ones. Fully-interactive protocols place no restrictions on interaction whereas sequentially-interactive ones only allows asking one query per user. They give a separation showing that sequentially-interactive protocols may require exponentially more samples than fully interactive ones. This separation is orthogonal to ours since our lower bounds are against completely non-interactive protocols and we separate them from sequentially-interactive protocols. \citet{acharya2018inference} implicitly consider another related model: \emph{one-way non-interactive protocols} where the server does not communicate the choice of a randomizer to the clients or, equivalently, cannot share a random string with clients. They give a polynomial separation between one-way non-interactive protocols and non-interactive protocols for the problem of identity testing for a discrete distribution over $k$ elements ($O(k)$ vs $\Omega(k^{3/2})$ samples).

Finally, we would like to add that moment matching was used in prior work to derive statistical query lower bounds for mixture distributions \cite{diakonikolas2017statistical,chen2019beyond}, to defend against adversarial examples~\cite{bubeck2019adversarial}, for robust statistics \cite{diakonikolas2019efficient}, for mean estimation with general norms \cite{li2019mean} and in other settings. Their proofs required different constructions from the one appearing in this paper.

\ifshort
\else
\section{Preliminaries}

\subsection{Models of Computation}
 \label{sec:prelim-models}
\paragraph{Local differential privacy:}
In the local differential privacy (LDP) model \citep{Warner,EvfimievskiGS03,KasiviswanathanLNRS11} it is assumed that each of $n$ users holds a sample of some dataset $(z_1,\dots,z_n) \in Z^n$. In the general version of the model the users can communicate with the server arbitrarily. The protocol is said to satisfy $(\eps,\delta)$-LDP if the algorithm that outputs the transcript\footnote{The transcript is the collection of all messages sent in the protocol.} of the protocol given the dataset $(z_1,\dots,z_n)$ satisfies the standard definition of $(\eps,\delta)$-differential privacy \citep{DworkMNS:06}.

We are interested in the non-interactive (one-round) LDP protocols. Such protocols can equivalently be described as non interactively accessing the following oracle:
\begin{definition}
An $\eps$-DP local randomizer $R:Z \rightarrow W$ is a randomized algorithm that given an input $z \in Z$, outputs a message $w \in W$, such that $\forall z_1,z_2\in Z$ and $w\in W$,
$\Pr[R(z_1) = w] \leq e^\eps \Pr[R(z_2) = w]$. For a dataset $S \in Z^n$, an $\LR_S$ oracle takes as an input an index $i$ and a local randomizer $R$ and outputs a random value $w$ obtained by applying $R(z_i)$. An algorithm is non-interactive $\eps$-LDP if it accesses $S$ only via the $\LR_S$ oracle with $\eps$-DP local randomizers, each sample is accessed at most once and all of its queries are determined before observing any of the oracle's responses.
\end{definition}

We remark that for non-interactive protocols, querying the same sample multiple times (subject to the entire communication satisfying $\eps$-DP) does not affect the model. Also for non-interactive protocols, allowing $(\eps,\delta)$-differential privacy instead of $\eps$-DP does not affect the power of the model \citep{BunNS18} (as long as $\delta$ is sufficiently small).

\paragraph{Statistical queries:}
The statistical query model of \citet{Kearns:98} is defined by having access to a \emph{statistical query oracle} to the data distribution $P$ instead of i.i.d.~samples from $P$. The oracle is defined as follows:

\begin{definition}
	Given a domain $Z$, a \emph{statistical query} is any (measurable) function $h \colon Z \to [-1,1]$. A \emph{statistical query oracle} $\STAT_P(\tau)$ with tolerance $\tau$ receives a statistical query $h$ and outputs an arbitrary value $v$ such that $|v -\E_{z\sim Z}[h(z)]| \leq \tau$.
\end{definition}
To solve a learning problem in this model an algorithm has to succeed for any oracle's responses that satisfy the guarantees on the tolerance. In other words, the guarantees of the algorithm should hold in the worst case over the responses of the oracle. A randomized learning algorithm needs to succeed for any SQ oracle whose responses may depend on the all queries asked so far but not on the internal randomness of the learning algorithm.

We say that an SQ algorithm is {\em non-interactive} (or {\em non-adaptive}) if all its queries are determined before observing any of the oracle's responses. \citet{KasiviswanathanLNRS11} show that one can simulate a non-interactive $\eps$-LDP algorithm using a non-adaptive SQ algorithm.
\begin{theorem}[\citep{KasiviswanathanLNRS11}]
\label{thm:LDP-2-SQ}
Let $\A$ be an $\eps$-LPD algorithm that makes non-interactive queries to $\LR_S$ for $S\in Z^n$ drawn i.i.d.~from some distribution $P$. Then for every $\delta >0$ there is a non-adaptive SQ algorithm $\A_{SQ}$ that in expectation makes $O(n \cdot e^\eps)$ queries to $\STAT_P(\tau)$ for $\tau =\Theta(\delta/(e^{2\eps}n))$ and whose output distribution has a total variation distance of at most $\delta$ from the output distribution of $\A$.

\end{theorem}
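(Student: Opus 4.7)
The plan is to simulate every local-randomizer query by a rejection-sampling loop whose accept/reject probability is estimated by a single statistical query. Fix any reference point $z_0 \in Z$ and consider a single query $(i, R_i)$ from the non-interactive algorithm $\A$, where $R_i$ is an $\eps$-DP local randomizer. By $\eps$-differential privacy, for every output $w$ the ratio $\rho_i(z,w) \doteq \Pr[R_i(z)=w]/\Pr[R_i(z_0)=w]$ lies in $[e^{-\eps}, e^{\eps}]$, so the target distribution $Q_i$ of $R_i(z)$ with $z \sim P$ satisfies $Q_i(w) = \Pr[R_i(z_0)=w]\cdot\E_{z\sim P}[\rho_i(z,w)]$. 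The simulator therefore draws a candidate $w \sim R_i(z_0)$ using internal randomness and accepts it with probability $p_i(w) \doteq \E_{z \sim P}[\rho_i(z,w)]/e^{\eps} \in [e^{-2\eps}, 1]$; a short calculation then shows that conditional on acceptance $w$ is exactly $Q_i$-distributed and that each trial is accepted with probability exactly $e^{-\eps}$.

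To realize this in the SQ model I issue, for each candidate $w$, a single query to $\STAT_P(\tau)$ on the function $h_{i,w}(z) \doteq \rho_i(z,w)/e^{\eps}$, which takes values in $[e^{-2\eps},1] \subseteq [-1,1]$; the answer, clipped to $[0,1]$, is used as the acceptance probability $\hat{p}_i(w)$. These queries depend only on the randomizers $R_i$ (prescribed in advance by the non-interactive $\A$) and on the simulator's own randomness used to draw the candidates $w$, so they are non-adaptive with respect to the oracle's responses. For each of the $n$ input samples the simulator runs the loop until a candidate is accepted and then feeds the resulting $w_i$ to $\A$'s post-processing step. Since each trial is accepted with probability $e^{-\eps} \pm O(\tau)$, the expected total number of SQ queries is $O(n e^{\eps})$.

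All that remains is to calibrate $\tau$. Writing $\hat{p}_i(w) = p_i(w) + \eta_w$ with $|\eta_w|\le\tau$, the accepted-conditional distribution of $w$ is $\hat{Q}_i(w) \propto \Pr[R_i(z_0)=w]\,\hat{p}_i(w)$; comparing with $Q_i(w) = e^{\eps}\Pr[R_i(z_0)=w]\,p_i(w)$ and bounding both the pointwise numerator error by $\tau$ and the normalizer perturbation by $e^{\eps}\tau$ yields $\dtv(\hat{Q}_i, Q_i) = O(e^{2\eps}\tau)$. Because the simulator produces its $n$ responses independently given $\A$'s queries and internal randomness, the total variation distance between the simulated transcript of $\A$ and the genuine one is $O(n e^{2\eps}\tau)$, so taking $\tau = \Theta(\delta/(e^{2\eps} n))$ suffices. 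The main subtlety I foresee is (i) handling continuous output spaces $W$, which requires replacing pointwise probability masses by Radon–Nikodym derivatives of $R_i(z)$ with respect to $R_i(z_0)$ but otherwise leaves the computation unchanged, and (ii) making the simulator formally non-adaptive by pre-allocating a list of candidate draws and SQ queries long enough to exceed any realistic budget and then counting only those actually consumed, which preserves the $O(n e^{\eps})$ expected query count while keeping the queries themselves independent of the oracle's answers.
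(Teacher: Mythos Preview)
The paper does not prove this statement; it is quoted from \cite{KasiviswanathanLNRS11} and invoked as a black box (the proof of Theorem~\ref{thm:informal-privacy} simply says it follows from Theorem~\ref{thm:sq} via Theorem~\ref{thm:LDP-2-SQ}). Your reconstruction is the standard rejection-sampling simulation from that reference and is essentially correct: proposing $w\sim R_i(z_0)$, estimating the acceptance probability $\E_{z\sim P}[\rho_i(z,w)]/e^\eps$ by a single statistical query, and summing the per-sample total-variation error over the $n$ users is exactly the KLNRS argument. A small remark: the per-sample TV error is actually $O(e^{\eps}\tau)$ rather than $O(e^{2\eps}\tau)$ (sum the pointwise errors against the proposal and note the normalizer is $e^{-\eps}\pm\tau$), so the stated $\tau=\Theta(\delta/(e^{2\eps}n))$ is slightly conservative, but of course still sufficient.

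The one place that deserves a cleaner statement is the non-adaptivity. ``Pre-allocating a list long enough to exceed any realistic budget and then counting only those actually consumed'' still lets the \emph{number} of queries you issue depend on the oracle's answers. Two standard fixes: either (i) commit, using internal randomness only, to the full sequence of candidates $w_{i,1},w_{i,2},\ldots$ and hence to the corresponding query functions $h_{i,w_{i,j}}$ before seeing any answer---the content of every query is then oracle-independent, and the expected number of answers you \emph{use} is $O(e^\eps)$ per user; or (ii) truncate each rejection loop at a fixed $T=\Theta(e^\eps\log(n/\delta))$ trials, absorb the $\le\delta/2$ probability of non-termination into the TV budget, and accept the extra $\log(n/\delta)$ factor in the query count. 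Either way the substance of your argument is unchanged.
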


We remark that this simulation extends to interactive LDP protocols as long as they rely on local randomizers with the sum of privacy parameters used on every point being at most $\eps$. Such protocols, first defined in \citep{KasiviswanathanLNRS11} are referred to as compositional $\eps$-LDP. They are known to be exponentially weaker than the general interactive LDP protocols although the separation is known only for rather unnatural problems \citep{JosephMR19}. The converse of this connection is also known: SQ algorithms can be simulated by $\eps$-compositional LDP protocols (and this simulation preserves the number of rounds of interaction) \citep{KasiviswanathanLNRS11}.

\subsection{Boolean Fourier Analysis}

Boolean Fourier analysis concerns with the Fourier coefficients of functions of Boolean inputs, $h \colon \{-1,1\}^d \to \mathbb{R}$. Let $U_d$ be the uniform distribution over $\{-1,1\}^d$, and for any $S \subseteq [d]$, define the coefficient
\[
\hat{h}(S) = \E_{x\sim U_d}[h(x) \chi_S(x)],\quad
\text{where } \chi_S(x) = \prod_{i \in S} x_i.
\]
As $\{\chi_S(x)\}_{S \subseteq [d]}$ is an orthonormal basis of the space of functions $f \colon \{-1,1\}^d \to \mathbb{R}$, $h$ can be decomposed as $h(x) = \sum_{S \subseteq [d]} \hat{h}(S) \chi_S(x)$.
\emph{Plancherel's theorem} states that
\begin{equation} \label{eq:plancherel}
\E_{x \sim U_d}[h(x)g(x)]
= \sum_{S\subseteq [d]} \hat{h}(S)\hat{g}(S),
\end{equation}
and \emph{Parseval's theorem} is the special case where $g = h$.
For a distribution $D$ over $\{-1,1\}^d$ we define the Fourier coefficient as the coefficients of the function $x \mapsto \Pr_D[x]/\Pr_{U_d}[x]$, namely,
\begin{equation} \label{eq:def-Fourier-dist}
\hat{D}(S) = \E_{x\sim U_d}\left[
\frac{\Pr_D[x]}{\Pr_{U_d}[x]}
\chi_S(x)
\right]
= \E_{x \sim D} \chi_S[x].
\end{equation}
Lastly, note that for a distribution $D$ and a function $h$, it follows from Plancherel's theorem that
\begin{equation} \label{eq:apply-plancherel}
\E_{x \sim D}[h(x)]
= \E_{x \sim U_d}
	\left[h(x) \frac{\Pr_D[x]}{\Pr_{U_d}[x]}\right]
= \sum_{S\subseteq[d]} \hat{D}(S) \hat{h}(S).
\end{equation}

\subsection{The Classical Moment Problem} \label{sec:prel-moment}

Given a probability distribution $P$ and $k \in \mathbb{N}$, it is natural to try and characterize all distributions that have the same first $k$ moments as $P$, namely, distributions $D$ with $\E_{x\sim D}[x^i] = \E_{x\sim P}[x^i]$ for all $i \in [k]$. There is a great literature in this topic, e.g.~\citep{akhiezer1965classical, krein1977markov} (see \cite{benjamini2012k} for an application in computer science). The study uses the notion of orthogonal polynomials:

\begin{definition}
	Let $P$ be a probability distribution over $\mathbb{R}$ with all moments finite. We say that a sequence of polynomials $p_0,p_1, \dots, p_k, \dots$ are \emph{orthogonal with respect to $P$} if the satisfy the following:
	\begin{itemize}
		\item For all $m \ge 0$, $p_m$ is of degree $m$ and has a positive leading coefficient.
		\item For all $m, \ell \ge 0$, $\E_{x \sim P}[p_m(x) p_\ell(x)] = \mathds{1}_{m = \ell}$.
	\end{itemize}
	Denote the above sequence of polynomials as the \emph{orthogonal polynomials} with respect to $P$.
\end{definition}
It is known that there is a unique sequence of orthogonal polynomials with respect to $P$, hence we call them \emph{the orthogonal polynomials} (w.r.t $P$).
Given the orthogonal polynimials $p_0,p_1,\dots$, define the function $\rho_k \colon \mathbb{R} \to \mathbb{R}$ as follows:
\begin{equation} \label{eq:def-rho}
\rho_k(x) = \frac{1}{\sum_{i=0}^k p_i(x)^2}.
\end{equation}
These functions characterize the amount of mass that can be concentrated on the point $x$ by distributions $D$ that match the first $2k$ moments of $P$:
\begin{theorem}[\citep{akhiezer1965classical}, Theorem 2.5.2] \label{thm:moment-match}
	Let $P$ be a distribution with finite moments, fix $k \in \mathbb{N}$ and $x \in \mathbb{R}$ and let $\rho_k$ be defined with respect to $P$. The following holds:
	\begin{itemize}
		\item There exists a distribution $D$ matching the first $2k$ moments of $P$ with $\Pr_{D}[x] = \rho_k(x)$.
		\item Any distribution $D$ that matches the first $2k$ moments of $P$ satisfies: $\Pr_D[x] \le \rho_k(x)$.
	\end{itemize}
\end{theorem}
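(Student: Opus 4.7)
The plan is to use the Christoffel--Darboux kernel
\[
K_k(x,y) = \sum_{i=0}^k p_i(x)\,p_i(y),
\]
which is the reproducing kernel for the space of polynomials of degree at most $k$ in $L^2(P)$: by orthonormality of $p_0,\dots,p_k$ under $P$, for any polynomial $q$ of degree at most $k$ one has $\E_{y\sim P}[q(y)K_k(x,y)] = q(x)$. In particular $K_k(x,x) = \sum_{i=0}^k p_i(x)^2 = 1/\rho_k(x)$, and $\rho_k(x)$ equals the minimum of $\E_{y\sim P}[q(y)^2]$ over polynomials $q$ of degree at most $k$ with $q(x)=1$, attained by $q^*(y) := K_k(x,y)/K_k(x,x)$.

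For the upper bound, suppose $D$ matches the first $2k$ moments of $P$. As a polynomial in $y$, $K_k(x,y)^2$ has degree $2k$, so its $D$- and $P$-expectations agree, and by orthonormality
\[
\E_{y\sim D}[K_k(x,y)^2] = \E_{y\sim P}[K_k(x,y)^2] = \sum_{i,j=0}^k p_i(x)\,p_j(x)\,\E_{y\sim P}[p_i(y)p_j(y)] = K_k(x,x).
\]
Since $K_k(x,\cdot)^2 \ge 0$ pointwise, the $D$-expectation is at least $\Pr_D[x]\cdot K_k(x,x)^2$, and combining these inequalities gives $\Pr_D[x] \le 1/K_k(x,x) = \rho_k(x)$.

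For the existence direction, I would construct $D$ as an atomic measure on $k+1$ points via a Gauss-type quadrature rule with a prescribed node at $x$: take the $k$ additional nodes to be the roots of $q^*$ (viewed as a polynomial in $y$ of degree $k$), and assign the unique weights making the rule exact on polynomials of degree at most $2k$. The resulting atomic probability distribution then matches the first $2k$ moments of $P$; evaluating the rule on $(q^*)^2$, which integrates to $\rho_k(x)$ under $P$, vanishes at the other nodes, and equals $1$ at $x$, pins down the weight at $x$ to be exactly $\rho_k(x)$.

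The main obstacle is the existence direction: I need to verify (i) that the roots of $q^*$ are $k$ distinct real points different from $x$, and (ii) that all quadrature weights are non-negative, so the construction really yields a probability distribution and not merely a signed measure matching the moments. Both are standard facts from the theory of Gauss--Radau-type quadrature, proved respectively by interlacing-of-zeros arguments for orthogonal polynomials (applied to the modified weight $(y-x)\,dP(y)$) and by representing each weight as the $P$-integral of a squared Lagrange interpolation polynomial.
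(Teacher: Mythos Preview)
The paper does not prove this theorem: it is quoted as a classical result from \cite{akhiezer1965classical} (their Theorem~2.5.2) and used as a black box. So there is no ``paper's own proof'' to compare against; your sketch is essentially the standard argument one finds in Akhiezer.

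Your upper bound is clean and correct. For the existence direction your plan is the right one, but two of your justifications deserve care. First, the modified-weight argument for the reality and simplicity of the nodes is delicate when $x$ lies in the interior of the convex hull of $\supp(P)$, since $(y-x)\,dP(y)$ is then a \emph{signed} measure and the usual orthogonal-polynomial theory does not directly apply. The cleaner route is to use the Christoffel--Darboux identity to write $(x-y)K_k(x,y)$ as a constant multiple of the quasi-orthogonal polynomial $p_{k+1}(y)-\lambda p_k(y)$ with $\lambda=p_{k+1}(x)/p_k(x)$; the zeros of such a combination are real, simple, and interlace with those of $p_k$ for every real $\lambda$, which handles all positions of $x$ uniformly. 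Second, your phrase ``the unique weights making the rule exact on polynomials of degree at most $2k$'' is slightly off: with $k+1$ nodes the weights are already uniquely determined by exactness on degree $\le k$ (Lagrange interpolation), and exactness up to degree $2k$ is a \emph{consequence} of the special choice of nodes, not an additional constraint used to pin down the weights. Once the nodes are chosen, each weight equals $\E_P[\ell_j(y)^2]\ge 0$ for the corresponding Lagrange basis polynomial $\ell_j$, which gives nonnegativity; that the weights sum to $1$ follows from exactness on the constant $1$. With these adjustments your existence argument goes through.
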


\section{Proof of Theorem \ref{thm:informal-privacy}}

Below we state and prove the lower bound on learning with statistical queries. The lower bounds for LDP protocols stated in Theorem \ref{thm:informal-privacy} follows directly from the reduction in Theorem \ref{thm:LDP-2-SQ}.

\begin{theorem} \label{thm:sq}
	Let $r \in (0,1)$, $\gamma \in (0,2^{-1/(1-r)})$, $n \ge \gamma^{-2-2r/5}$ and define $\eta = \gamma^{1-r}$. Let $\A$ be a non-adaptive statistical query algorithm such that for any linear separator $f^*$ and distribution $D$ over $X= \{-1,1\}^{2d}$ with margin $\gamma(f^*,D) \ge \gamma$, returns a hypothesis $\hat{f}$ with $\E_{\A}[\err_{f^*,D}(\hat{f})] \le 1/2-\eta$. If $\A$ has access to statistical queries with tolerance $\tau = \exp\left(-c\gamma^{-2r/5}\right)$, then $\A$ requires at least $\exp\left(c\gamma^{-2r/5}\right)$ queries, where $c>0$ is a constant depending only on $r$.
\end{theorem}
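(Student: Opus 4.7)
The plan is to prove Theorem \ref{thm:sq} via the variance method for non-adaptive statistical query lower bounds: exhibit a large family $\{(f_v, D_v)\}_{v \in V}$ of instances of margin at least $\gamma$, together with a reference distribution $P_0$ on $X \times \{-1,1\}$, such that no single query $h$ can distinguish $P_0$ from more than a negligible fraction of the joint laws $P^*_v$ of $(x, f_v(x))$ under $x \sim D_v$. The reference $P_0$ is chosen with $x$ uniform on $\{-1,1\}^{2d}$ and $y$ independently uniform in $\{-1,1\}$; under $P_0$ every hypothesis incurs error exactly $1/2$, so indistinguishability is incompatible with achieving error $1/2-\eta$ on $P^*_v$.

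The family is built from the one-dimensional moment problem of Section \ref{sec:prel-moment}. For $k = \Theta(\gamma^{-2r/5})$ I would first construct two distributions $\mu^+, \mu^-$ on $\mathbb{R}$ whose first $2k$ moments agree but which are supported on $[\gamma,1]$ and $[-1,-\gamma]$ respectively; existence follows by using Theorem \ref{thm:moment-match} to pair up atoms across the gap $(-\gamma, \gamma)$ while preserving the target moments. Then for each unit vector $v$ in a nearly orthogonal packing $V \subseteq \R^{2d}$ of size $N = \exp(\Omega(\gamma^{-2r/5}))$ (e.g., a random family of $\pm 1/\sqrt{2d}$ vectors), I define $D_v$ on $\{-1,1\}^{2d}$ by the following experiment: sample $y$ uniformly in $\{-1,1\}$; sample $s$ from $\mu^y$; and sample $x$ uniformly on $\{-1,1\}^{2d}$ conditioned on $\langle v, x \rangle / (\|v\|_2 \sqrt{2d}) = s$. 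By construction $\sgn(\langle v, x\rangle) = y$ and the margin is at least $\gamma$, so $f_v(x) = \sgn(\langle v, x\rangle)$ is a valid planted linear separator.

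The crux of the argument is the variance bound
\[
\frac{1}{|V|}\sum_{v \in V}\bigl(\E_{P^*_v}[h] - \E_{P_0}[h]\bigr)^2 \le \beta := \exp(-\Omega(\gamma^{-2r/5}))
\]
for every query $h \colon \{-1,1\}^{2d} \times \{-1,1\} \to [-1,1]$. Writing $g_\pm(x) = h(x, \pm 1)$ and using \eqref{eq:apply-plancherel},
\[
\E_{P^*_v}[h] - \E_{P_0}[h] = \tfrac{1}{2}\sum_{S \ne \emptyset}\bigl(\hat{D}^+_v(S)\hat{g}_+(S) + \hat{D}^-_v(S)\hat{g}_-(S)\bigr),
\]
where $D^\pm_v$ denotes the $x$-marginal conditioned on $y = \pm 1$. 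Because the lifted distribution is a symmetric function of $\langle v, x\rangle$, the Fourier coefficient $\hat{D}^\pm_v(S)$ factors as a scalar kernel $\psi^\pm(|S|)$ (depending only on $|S|$ and $\mu^\pm$) times the $S$-parity $\chi_S(v)$. Matching the first $2k$ moments of $\mu^+$ and $\mu^-$ then forces $\psi^+(|S|) = \psi^-(|S|)$ for $1 \le |S| \le k$, killing the low-level contributions, while Parseval applied to $g_\pm$ (each of squared Fourier mass at most $1$) combined with the pairwise near-orthogonality of $\{\chi_S(v)\}_{v \in V}$ for $|S| > k$ yields the claimed bound. Tuning the decay of $\psi^\pm(|S|)$ for $|S| > k$ against the size of $V$ to obtain the precise $\gamma^{-2r/5}$ exponent is the main obstacle, as it links the sharpness of the 1D moment matching to the geometry of random parities on $\{-1,1\}^{2d}$.

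With the variance bound in hand the conclusion is routine. A non-adaptive SQ algorithm is specified by queries $h_1, \dots, h_q$ and internal randomness; by Markov's inequality, the set of $v \in V$ on which some query violates tolerance $\tau$ when answered by $\E_{P_0}[h_i]$ has size at most $q\beta/\tau^2$. Choosing $\tau = \exp(-c\gamma^{-2r/5})$ and $q \le \exp(c\gamma^{-2r/5})$ with $c$ sufficiently small (depending on $r$) makes this strictly less than $|V|$, so some $v$ exists on which the algorithm's output distribution coincides with its behavior on $P_0$ and therefore fails to attain error $1/2-\eta$ on $P^*_v$. Theorem \ref{thm:informal-privacy} then follows via Theorem \ref{thm:LDP-2-SQ}.
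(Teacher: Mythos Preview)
Your proposal has a fatal gap at the moment-matching step. Two probability measures with supports contained in $[\gamma,1]$ and $[-1,-\gamma]$ cannot share even their first moment: $\E_{\mu^+}[s]\ge\gamma>0>-\gamma\ge\E_{\mu^-}[s]$. Theorem~\ref{thm:moment-match} does not circumvent this --- it places a prescribed mass at a prescribed point while matching the moments of a given $P$, but gives no control over where the remaining mass lands. Worse, the comparison of $P^*_v$ to the uniform reference $P_0$ cannot yield the claimed variance bound regardless of how $\mu^\pm$ are chosen under your support constraints. The single query $h(x,y)=y\,x_1$ has $\E_{P_0}[h]=0$ while $\E_{P^*_v}[h]=\tfrac12\,v'_1\bigl(\E_{\mu^+}[s]-\E_{\mu^-}[s]\bigr)$ (with $v'_1\in\{\pm1\}$ the sign of the first coordinate of $v$), so $|\E_{P^*_v}[h]-\E_{P_0}[h]|\ge\gamma$ for \emph{every} $v$; one query already separates all of $V$ from $P_0$ at tolerance $\tau\ll\gamma$. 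More generally, having $\psi^+(|S|)=\psi^-(|S|)$ at low degrees does not ``kill the low-level contributions'': it only collapses the coefficient to $\psi^+(|S|)\bigl(\hat g_+(S)+\hat g_-(S)\bigr)$, which is not small unless $\psi^+$ itself is.

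The paper's construction sidesteps both obstructions by abandoning the single reference $P_0$. For each sign vector $a\in\{-1,1\}^{2d}$ it builds a \emph{pair} of instances $(D_{a,0},f_{a,0})$ and $(D_{a,1},f_{a,1})$ that are query-indistinguishable yet have $f_{a,0}\ne f_{a,1}$ on a $1-O(\eta)$ fraction of the mass and $d_{\mathrm{TV}}(D_{a,0},D_{a,1})\le O(\eta)$. On the real line it takes $P'$ supported in $[\Theta(\gamma),1/2]$ and $Q'$ with only $1-O(\eta)$ of its mass at $-\Theta(\gamma)$; allowing $Q'$ this $O(\eta)$ leakage is precisely what makes matching the first $2k$ moments of $P'$ possible. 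Because both live in $[-1/2,1/2]$, moments of order $>k$ are automatically $\le 2^{-k}$, so after lifting to $\{-1,1\}^d$ the resulting $P_1,P_{-1}$ agree on \emph{every} Fourier level up to $e^{-ck}$, not just the low ones. In each instance only the perfectly one-sided block $P_1$ determines the label, so the margin holds on the full support; the error bound then follows from the pairing, since for any indistinguishable $a$ the common output $\hat f$ must satisfy $\err_{f_{a,0},D_{a,0}}(\hat f)+\err_{f_{a,1},D_{a,1}}(\hat f)\ge 1-O(\eta)$.
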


\subsection{Outline}
\label{sec:proof-outline}
We start with a brief sketch of the proof. Let $X = \{-1,1\}^{2d}$ and $Y = \{-1,1\}$. Our proof is based on a construction of two distribution $D_0$ and $D_1$ over $\{-1,1\}^{2d}\times \{-1,1\}$ and two linear functions $f_0$ and $f_1$ that are hard to distinguish but they almost always disagree on the label $y$. Specifically, the have the following properties:
\begin{itemize}
	\item Any $(x,y) \in \supp(D_b)$ satisfies $y = f_b(x)$ for $b \in \{0,1\}$, and additionally, $f_0$ and $f_1$ have $\Omega(\gamma)$-classification margin over the supports of $D_0$ and $D_1$, respectively.
	\item $D_0$ and $D_1$ have nearly the same Fourier coefficients: for any $S \subseteq [2d]$, $|\hat{D_0}(x) - \hat{D_1}(x)|$ is exponentially small.
	\item $f_0(x) \ne f_1(x)$ for nearly all values of $x$: $\Pr_{(x,y)\sim D_b}[f_0(x) = f_1(x)] = O(\eta)$, for $b \in \{0,1\}$ where $\eta := \gamma^{1-r}$.
\end{itemize}
Given these two distributions, we can create a hard family of distributions containing many pairs obtained from the original pair by a simple translation. Any efficient SQ algorithm would find most pairs of distributions impossible to distinguish. That is, the algorithm cannot distinguish which of the two distributions in the pair is the correct one. As a consequence, it will not be able to predict the correct label of $x$ for most values of $x$.

In the rest of this section we describe how $D_0$ and $D_1$ are constructed. The construction involves multiple consecutive steps that we describe below. We start with two distributions $P$ and $Q$ over $\mathbb{R}$ that satisfy:
\begin{enumerate}
	\item $P$ and $Q$ have matching first $2k = \gamma^{-\Omega(1)}$ moments.
	\item $\Pr_{p\sim P}[p \ge \gamma] = 1$ and $\Pr_{q \sim Q}[q\le-\gamma] \ge 1-O(\eta)$, where $\eta = \gamma^{1-r}$.
\end{enumerate}
The distribution $P$ is a mixture in which the value $\gamma$ has weight $1-\eta$ and a scaled and shifted exponential distribution defined on $[\gamma,\infty)$ has weight $\eta$.
To show that there exists a distribution $Q$ which matches the first $2k$ moments of $P$ and satisfies $\Pr_Q[-\gamma]\ge 1-O(\eta)$, it suffices to show that $\rho_k(-\gamma) \ge 1-O(\eta)$, where $\rho_k$ is the function from \eqref{eq:def-rho}, which is defined by the orthogonal polynomials of $P$. We calculate these polynomials as a linear combination of the orthogonal polynomials of the exponential distribution, for which a closed formula is known. We remark that instead of the exponential distribution other distributions can be used to get a similar bound on $\rho_k$.

Based on $P$ and $Q$, we create two distributions $P_1$ and $P_{-1}$ over $\{-1,1\}^d$ which satisfy:
\begin{itemize}
	\item $P_1$ and $P_{-1}$ nearly match all Fourier coefficients.
	\item $\Pr_{x\sim P_1}[\sum_i x_i/d \ge \gamma/2] =1$ and $\Pr_{x\sim P_{-1}}[\sum_i x_i/d \le -\gamma/2] \ge 1 - O(\eta)$.
\end{itemize}
To draw $x \sim P_1$ we first draw $p \sim P$ and then draw each bit of $x$ independently with mean $p$. Similarly, we draw $P_{-1}$ given $Q$. The Fourier coefficients of $P_1$ and $P_{-1}$ correspond to the moments of $P$ and $Q$, respectively: $\hat{P_1}(S) = \E_{P}[p^{|S|}]$ and similarly for $P_{-1}$ and $Q$. Hence the Fourier coefficients of $P_1$ and $P_{-1}$ nearly match (note that we've only shown that $P$ and $Q$ match the first $2k$ moments, however, the higher moments are exponentially small and negligible). The second property of $P_1$ and $P_{-1}$ follows from the second property of $P$ and $Q$ (except with some small failure probability which we can condition out).

Next, we explain the distributions $D_0$ and $D_1$ and the functions $f_0$ and $f_1$ that appear in the first paragraph: $f_0$ is defined as a majority over the first $d$ bits, $f_0(x) = \sgn(\sum_{i=1}^d x_i)$ and $f_1$ is a majority over the last $d$ bits, $f_1(x) = \sgn(\sum_{i=d+1}^{2d} x_i)$. To draw $(x,y)\sim D_0$, we independently draw $y \sim \unif(\{-1,1\})$, $z_1 \sim P_1$ and $z_{-1} \sim P_{-1}$. Then, we set $x = (yz_1, yz_{-1})$. We define $D_1$ nearly the same way, with the only difference that $x = (yz_{-1}, yz_1)$. From the properties of $P_1$ and $P_{-1}$, all properties of $D_0$ and $D_1$ presented in the first paragraph are satisfied.

\subsection{Proof of Theorem \ref{thm:sq}}
We begin with some notations:
\begin{itemize}
	\item Given a statistical query $h$, denote $h(D,f) = \E_{x\sim D}(h(x,f(x)))$.
	\item We use $C,C',c,c_1,\dots > 0$ to denote universal constants or constants depending only on $r$. In the proof we will allow redundant constants depending on $r$ (e.g.~the advantage will be $C \eta$ rather than $\eta)$.
	\item Let $\unif(A)$ denote the uniform distribution over a finite set $A$, let $\dtv$ denote the total variation distance of two distributions and let $\supp(P)$ denote the support of a probability distribution $P$.
	\item In contrast to the presentation in the intro, we conveniently assume that the distributions $D$ are only over $X$ rather than over $X \times Y$.
\end{itemize}

The general idea is to split the $2d$ bits of $x$ into two bit-sets, each containing $d$ bits. The value of $f^*(x)$ will be a function of one of these sets, however any efficient non-adaptive algorithm would not be capable of finding the correct subset. Moreover, intuitively speaking, the incorrect subset will almost always \emph{lie} by claiming the wrong value for $f^*(x)$.

%We start with an incorrect statement that nevertheless conveys intuition on the construction of the hard family of distributions, and then correct this statement.

We begin with two distributions $P_1$ and $P_{-1}$ that nearly match all Fourier coefficients, however, $\sgn(\sum_{i=1}^{2d} x_i) = 1$ for any $x \in \supp(P_1)$ while $\sgn(\sum_i x_i) = -1$ with probability $1- O(\eta)$ for $x \sim P_{-1}$.

\begin{lemma} \label{lem:indisting-dist}
	There exists two distributions, $P_1$ and $P_{-1}$ over $\{-1,1\}^d$, such that the following holds:
	\begin{enumerate}
		\item $\dtv(P_1, -P_{-1}) \le C\eta$, where $x \sim -P_{-1}$ is obtained by drawing $x \sim P_{-1}$ and outputting $-x$, and $C>0$ is a constant depending only on $r$. \label{itm:TV}
		\item Any $x \sim \supp(P_1)$ satisfies $\sum_i x_i/d \ge C\gamma$. \label{itm:margin}
		\item $P_1$ and $P_{-1}$ are nearly indistinguishable: for any $S \in [d]$, $|\widehat{P_1}(S) - \widehat{P_{-1}}(S)| \le \exp(-c\gamma^{-2r/5})$, where $c>0$ is a constant depending only on $r$. \label{itm:closeness}
	\end{enumerate}
\end{lemma}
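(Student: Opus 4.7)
The plan is to follow the two-stage construction described in Section~\ref{sec:proof-outline}.

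First I would construct a one-dimensional pair $(P,Q)$ on $[-1,1]$ with matching first $2k$ moments, for $k \asymp \gamma^{-2r/5}$, such that $P$ is concentrated on $\{p \ge \gamma\}$ while $Q$ places mass at least $1-O(\eta)$ on the single point $-\gamma$. Take
\[
P = (1-\eta)\,\delta_\gamma + \eta\, E,
\]
where $E$ is a rescaled and shifted exponential supported on a narrow subinterval of $[\gamma,1]$; the rate and support length of $E$ will be chosen so that $|\E_P[p^s]|$ decays to $\exp(-\Omega(k))$ once $s>2k$. By Theorem~\ref{thm:moment-match}, a distribution $Q$ matching the first $2k$ moments of $P$ can place mass up to $\rho_k(-\gamma)$ on the point $-\gamma$, so the target reduces to showing $\rho_k(-\gamma) \ge 1-O(\eta)$, where $\rho_k$ is defined in \eqref{eq:def-rho} via the orthogonal polynomials of $P$.

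Second, I would lift $(P,Q)$ to distributions $P_1$ and $P_{-1}$ on $\{-1,1\}^d$ by bit-sampling: to draw $x\sim P_1$, first draw $p \sim P$, then draw the $d$ coordinates of $x$ independently with common mean $p$; define $P_{-1}$ analogously using $Q$. Since the coordinates are conditionally independent given the mean, the Fourier coefficients factorize,
\[
\widehat{P_1}(S) = \E_{p \sim P}\bigl[p^{|S|}\bigr], \qquad \widehat{P_{-1}}(S) = \E_{q \sim Q}\bigl[q^{|S|}\bigr],
\]
so moment matching gives $\widehat{P_1}(S) = \widehat{P_{-1}}(S)$ for every $|S|\le 2k$; for $|S|>2k$ the chosen decay of the tail moments of $P$ and the analogous concentration of $Q$ near $-\gamma$ make both sides at most $\exp(-ck)$, yielding property~\ref{itm:closeness}. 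Property~\ref{itm:margin} follows from $p\ge\gamma$ on $\supp(P)$ combined with a Chernoff bound conditional on $p$: the event $\sum_i x_i/d \ge \gamma/2$ has conditional probability $1-\exp(-\Omega(d\gamma^2))$, and I would redefine $P_1$ to be the distribution conditioned on this event, which perturbs the Fourier coefficients by an exponentially small amount. Property~\ref{itm:TV} is $\dtv(P_1,-P_{-1}) \le C\eta$; both distributions are, up to $O(\eta)$ TV-perturbation, equal to the product of $d$ i.i.d.\ bits of common mean $\gamma$ (using $\Pr_P[p=\gamma]=1-\eta$ and $\Pr_Q[q=-\gamma]\ge 1-O(\eta)$).

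The principal obstacle is proving $\rho_k(-\gamma) \ge 1-O(\eta)$. The plan is to write the orthogonal polynomials $p_m$ of $P$ as a rank-one perturbation of the orthogonal polynomials of its exponential component, which are the classical Laguerre polynomials (shifted and rescaled accordingly). Because $P$ differs from $E$ by a single atom of weight $1-\eta$ at $\gamma$, Gram--Schmidt yields each $p_m$ as a concrete linear combination of a bounded number of Laguerre polynomials, and $p_m(-\gamma)$ can then be evaluated in closed form. The delicate step is tuning the rate of $E$ and the moment count $2k\asymp \gamma^{-2r/5}$ against the mixture weight $\eta=\gamma^{1-r}$ so that $\sum_{m=0}^k p_m(-\gamma)^2 \le 1/(1-O(\eta))$, and the exponent $2r/5$ in the final bound reflects precisely this balance.
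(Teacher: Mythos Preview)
Your overall plan---construct a one-dimensional moment-matched pair via Theorem~\ref{thm:moment-match} and a Laguerre-polynomial computation of $\rho_k$, then lift to the cube by bit-sampling and condition to enforce the margin---is the paper's approach. But there is a concrete gap in how you handle $Q$. Theorem~\ref{thm:moment-match} gives a $Q$ with $\Pr_Q[-\gamma]=\rho_k(-\gamma)$ and the correct first $2k$ moments, but no control on where the remaining $O(\eta)$ mass of $Q$ sits; it may lie outside $[-1,1]$, which makes ``draw coordinates with mean $q\sim Q$'' undefined, and ``concentration of $Q$ near $-\gamma$'' does not bound $\E_Q[q^s]$ for $s>2k$ since that leftover mass can dominate high moments. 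There is also a tension in your description of $E$: if $E$ is truncated to a compact interval, its orthogonal polynomials are not shifted Laguerre polynomials and the rank-one-perturbation calculation does not apply as stated; if $E$ is the untruncated shifted exponential, the Laguerre calculation is fine but $|\E_P[p^s]|$ does not decay for $s>2k$.

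The paper separates these two concerns. It first places the atom at $0$ and uses the \emph{standard} untruncated exponential, proves $\rho_k(x)\ge 1-C\eta$ for all $|x|\le\eta k^{-3/2}$ via the Laguerre calculation, and applies this at $x=-\gamma'$ with $\gamma'=\gamma^{1-2r/5}$. Only afterwards does it shift, scale down by a factor $\Theta(k)$, and condition \emph{both} $P$ and $Q$ on $[-1/2,1/2]$; the tail of $Q$ outside this interval is bounded by Markov on the $2k$-th moment (which $Q$ inherits from $P$), so the conditioning perturbs each moment by only $e^{-\Omega(k)}$. After this step both distributions live in $[-1/2,1/2]$, so bit-sampling is well-defined and moments of order exceeding $k$ are automatically at most $2^{-k}$. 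This intermediate rescale-and-condition step is what your plan is missing.
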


The proof utilizes results from the classical moment problem, and involves calculating the orthogonal polynomials of some distribution,
as will be elaborated in Section~\ref{sec:pr-two-dist}.

Given $P_1$ and $P_{-1}$, we construct two pairs of distribution-function $(f_0,D_0)$ and $(f_1,D_1)$ which are hard to distinguish, in a sense that will be clear later.
The function $f_0$ is a majority of the first $d$ coordinates, $f_0(x) = \sgn(\sum_{i=1}^d x_i)$ and $f_1$ is a majority of the last $d$ bits, $f_1(x) = \sgn(\sum_{i=d+1}^{2d} x_i)$. A random $x \sim D_0$ is drawn by drawing independently $y \in \mathrm{Uniform}(\{-1,1\})$, $z_1 \sim P_1$, $z_{-1} \sim P_{-1}$ and setting $x = (yz_1, yz_{-1})$. Note that $f_0(x) = y$, where $y$ is the value drawn above. Similarly, $x\sim D_1$ is drawn similarly, with the following distinction: $x = (yz_{-1}, yz_1)$. Here, notice that $f_1(x) = y$.

%The distribution $D_0$ is obtained as follows: with probability $1/2$, first, $y \in \{-1,1\}$ is drawn uniformly, $z_0$ is drawn from $P_1$ and $z_1$ is drawn from $P_{-1}$, independent of each other. Then, one sets $x = (y z_0, y z_1)$. The distribution $D_1$ is defined similarly, when $z_0$ is drawn from $P_{-1}$ and $z_1$ is drawn from $P_1$ (instead of being drawn from $P_1$ and $P_{-1}$, respectively).

%The idea is that for $(x,y)$ drawn from $D_0$ it always holds that $y$ is the majority of the first $d$ coordinates of $x$, while for $(x,y)\sim D_1$, $y$ is the majority of the last $d$ coordinates of $x$.

Since $P_1$ is nearly distributed as $-P_{-1}$, with high probability over $x\sim D_0$, the majority of the first $d$ coordinates of $x$ is almost always the opposite of the majority of the last last $d$ coordinates (and similarly when $x\sim D_1$). In particular, if one does not know whether the true function $f^*$ equals $f_0$ or $f_1$, it is impossible to predict $f^*(x)$ given $x$ with probability significantly greater than a half.

Utilizing the fact that the building blocks of $D_0$ and $D_1$, namely $P_1$ and $P_{-1}$, nearly match their Fourier coefficients, we can generate a family of hard distributions by simple translations of $D_0$ and $D_1$:
for any $a \in \{-1,1\}^{2d}$ define the pairs $(f_{a,0},D_{a,0})$ and $(f_{a,1}, D_{a,1})$ as follows:
$f_{a,0} = \sgn(\sum_{i=1}^d a_i x_i)$ and $x\sim D_{a,0}$ is obtained by drawing $x' \sim D_0$ and setting $x_i = a_i x'_i$ for $i \in [2d]$. Similarly, $f_{a,1} = \sgn(\sum_{i=d+1}^{2d} a_i x_i)$ and $D_{a,1}$ is obtained by drawing $x' \sim D_1$ and setting $x_i = a_i x'_i$.
%the distribution $D_{a,0}$ over $\{-1,1\}^{2d}\times \{-1,1\}$ as follows: first $(x',y')$ is drawn from $D_{0}$. Then, outputs $x_i = x'_i a_i$ for $i \in [2d]$ and $y = y'$. Similarly, $D_{a,1}$ is defined where $(x',y')\sim D_1$.
The following are simple properties of the defined distributions, which follow mainly from Lemma~\ref{lem:indisting-dist}, and are proved in Section~\ref{sec:pr-lem-D}

\begin{lemma} \label{lem:disc-char}
	Fix $a \in \{-1,1\}^{2d}$.
	Then, $D_{a,0}$ and $D_{a,1}$ satisfy the following properties:
	\begin{enumerate}
		\item $\dtv(D_{a,0}, D_{a,1}) \le 2\dtv(P_1,-P_{-1}) \le C\eta$ \label{itm:D-TV}
		\item $\gamma(f_{a,0},D_{a,0}) \ge C\gamma$ and $\gamma(f_{a,1},D_{a,1}) \ge C\gamma$ \label{itm:D-margin}
		\item $\Pr_{x \sim D_{a,0}}[f_{a,0}(x) = f_{a,1}(x)] \le C\eta$ and
		$\Pr_{x \sim D_{a,1}}[f_{a,0}(x) = f_{a,1}(x)] \le C\eta$ \label{itm:D-pr}
	\end{enumerate}
	where $C>0$ depends only on $r$ (recall that $\eta = \gamma^{1-r}$).
\end{lemma}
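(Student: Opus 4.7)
The plan is to reduce everything to the base case $a = (1,\ldots,1)$, since $x \mapsto a \odot x$ (coordinate-wise multiplication) is an involutive bijection of $\{-1,1\}^{2d}$ under which $D_{a,b}$ is exactly the pushforward of $D_b$, $f_{a,b}(x) = f_b(a \odot x)$, and any margin-certifying weight vector $w$ for $(f_b, D_b)$ is carried to $a \odot w$ for $(f_{a,b}, D_{a,b})$ with identical norm and inner products. Hence it suffices to establish the three items for the base pair $(D_0, f_0)$, $(D_1, f_1)$.

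For item~\ref{itm:D-TV} I would condition on $y$ and write
\[
D_0 = \tfrac{1}{2}\bigl(P_1 \otimes P_{-1}\bigr) + \tfrac{1}{2}\bigl((-P_1) \otimes (-P_{-1})\bigr),\quad D_1 = \tfrac{1}{2}\bigl(P_{-1} \otimes P_1\bigr) + \tfrac{1}{2}\bigl((-P_{-1}) \otimes (-P_1)\bigr).
\]
The correct pairing is to align the $y = +1$ component of $D_0$ with the $y = -1$ component of $D_1$, and symmetrically, because only in that pairing does the closeness of $P_1$ and $-P_{-1}$ apply directly on each of the two halves. Tensorization of TV together with $\dtv(P_{-1}, -P_1) = \dtv(P_1, -P_{-1})$ then gives $\dtv\bigl(P_1 \otimes P_{-1}, (-P_{-1}) \otimes (-P_1)\bigr) \le 2\dtv(P_1, -P_{-1})$, and convexity of TV on mixtures yields $\dtv(D_0, D_1) \le 2\dtv(P_1, -P_{-1}) \le C\eta$ by Lemma~\ref{lem:indisting-dist}, item~\ref{itm:TV}.

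For item~\ref{itm:D-margin} I would use $w = (1,\ldots,1,0,\ldots,0)/\sqrt{d}$ supported on the first $d$ coordinates. For $x = (yz_1, yz_{-1}) \in \supp(D_0)$, $\sum_{i=1}^d x_i = y \sum_i (z_1)_i$, which by Lemma~\ref{lem:indisting-dist}, item~\ref{itm:margin}, has sign $y = f_0(x)$ and magnitude at least $C\gamma d$. Since $\|x\|_2 = \sqrt{2d}$ and $\|w\|_2 = 1$, this yields margin at least $C\gamma/\sqrt{2}$; the case $(f_1, D_1)$ is symmetric. For item~\ref{itm:D-pr}, the previous step gives $f_0(x) = y$ for all $x \in \supp(D_0)$, while $f_1(x) = y \cdot \sgn(\sum_i (z_{-1})_i)$, so $f_0(x) = f_1(x)$ exactly when $\sum_i (z_{-1})_i > 0$. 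Under $-P_1$ this event has probability zero (by Lemma~\ref{lem:indisting-dist}, item~\ref{itm:margin}), so by Lemma~\ref{lem:indisting-dist}, item~\ref{itm:TV}, it has probability at most $\dtv(P_{-1}, -P_1) \le C\eta$ under $P_{-1}$. The bound for $D_1$ is identical after interchanging the two halves.

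The main obstacle is the mixture pairing in item~\ref{itm:D-TV}: one must notice that a flip of $y$ in $D_1$ is exactly what turns a ``$P_{-1}$ versus $P_1$'' comparison into ``$-P_{-1}$ versus $P_1$'', which is where Lemma~\ref{lem:indisting-dist}, item~\ref{itm:TV}, can be applied. The margin and disagreement bounds are then routine consequences of Lemma~\ref{lem:indisting-dist}.
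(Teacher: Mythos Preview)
Your proof is correct and follows essentially the same approach as the paper. For item~\ref{itm:D-TV}, the paper phrases the argument as data processing (apply the ``multiply by a uniform $y$'' channel to $P_1\times P_{-1}$ versus $(-P_{-1})\times(-P_1)$) rather than as convexity of TV over mixtures with your cross-pairing of $y$-components, but these are two formulations of the same step and both land on the tensorization bound $\dtv(P_1\times P_{-1},(-P_{-1})\times(-P_1))\le 2\dtv(P_1,-P_{-1})$; items~\ref{itm:D-margin} and~\ref{itm:D-pr} match the paper's argument almost verbatim.
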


Next, we claim that for any set of $\exp(O(\gamma^{-2r/5}))$ statistical queries and for nearly all values of $a$, the queries will have nearly the same value for both $(f_{a,0},D_{a,0})$ and $(f_{a,1},D_{a,1})$. This follows from the fact that $P_1$ and $P_{-1}$ have all their Fourier coefficient close to each other.
\begin{lemma}\label{lem:many-statQ-hardness}
	Fix a set of statistical queries $h_1, \dots, h_k$ for $k \le \exp(c_1 \gamma^{-2r/5})$. Then,
	\begin{align*}
	&\Pr_{a \in \{-1,1\}^{2d}}\left[\exists i \in [k],\ |h_i(D_{a,0},f_{a,0}) - h_i(D_{a,1},f_{a,1})|\right.\\
	&\left.\ge \exp(-c_2 \gamma^{-2r/5})\right]
	\le \exp(-c_3 \gamma^{-2r/5}),
	\end{align*}
	where $c_1,c_2,c_3 >0$ depend only on $r$.
\end{lemma}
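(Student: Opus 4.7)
The plan is to view $X_i(a) := h_i(D_{a,0},f_{a,0}) - h_i(D_{a,1},f_{a,1})$ as a function of $a \in \{-1,1\}^{2d}$, expand it in the Walsh basis $\{\chi_S(a)\}_{S\subseteq [2d]}$, and show that its $L^2(\unif)$ norm is exponentially small; Chebyshev applied to each query together with a union bound over the $k$ queries then yields the lemma.

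For each query $h=h_i$ and each label $y \in \{-1,1\}$, let $g_y(x) := h(x,y)$ and Fourier-expand $g_y(x) = \sum_{S\subseteq [2d]} \hat{g}_y(S)\chi_S(x)$. The key observation is that a sample from $D_{a,b}$ has the form $x = a\cdot x'$ with $x'\sim D_b$, and that $f_{a,b}(x) = f_b(x') = y$ does \emph{not} depend on $a$. Multiplicativity $\chi_S(a\cdot x') = \chi_S(a)\chi_S(x')$ therefore isolates the $a$-dependence into the orthonormal characters:
\[
h(D_{a,b},f_{a,b}) \;=\; \sum_S \chi_S(a)\cdot \E_y\bigl[\hat{g}_y(S)\,\widehat{D_b^y}(S)\bigr],
\]
where $D_b^y$ is the conditional law of $x'$ given the label $y$. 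Consequently $X_i(a) = \sum_S \chi_S(a)\,\alpha_S$ with $\alpha_S := \E_y\bigl[\hat{g}_y(S)(\widehat{D_0^y}(S) - \widehat{D_1^y}(S))\bigr]$, and $\alpha_\emptyset = 0$ since $\widehat{D_b^y}(\emptyset)=1$.

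Next I would bound the $\alpha_S$ using the Fourier closeness from Lemma~\ref{lem:indisting-dist}(\ref{itm:closeness}). Writing $x' = \pm(z_1, z_{-1})$ with $z_1 \sim P_1$ and $z_{-1}\sim P_{-1}$ independent, and splitting $S = S_1 \sqcup (d + \tilde{S}_2)$ with $S_1,\tilde{S}_2 \subseteq [d]$, the product structure yields $\widehat{D_0^y}(S) = (\pm 1)\hat{P_1}(S_1)\hat{P_{-1}}(\tilde{S}_2)$ and $\widehat{D_1^y}(S) = (\pm 1)\hat{P_{-1}}(S_1)\hat{P_1}(\tilde{S}_2)$, with the same $y$-dependent sign in both. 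The telescoping identity $ab - cd = a(b-d) + d(a-c)$, together with $|\hat{P_1}(T) - \hat{P_{-1}}(T)| \le \exp(-c\gamma^{-2r/5})$ and the trivial bound $|\hat{P_{\pm 1}}(T)|\le 1$, gives $|\widehat{D_0^y}(S) - \widehat{D_1^y}(S)| \le 2\exp(-c\gamma^{-2r/5})$ uniformly in $S$ and $y$. Parseval applied to each $g_y \in [-1,1]$ then produces the variance bound
\[
\sum_S \alpha_S^2 \;\le\; O(1)\cdot \exp(-2c\gamma^{-2r/5}).
\]

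Since $\{\chi_S\}_{S\neq\emptyset}$ is orthonormal under uniform $a$, $X_i(a)$ has mean zero and variance at most the quantity above. Chebyshev bounds $\Pr_a[|X_i(a)| \ge \exp(-c_2\gamma^{-2r/5})]$ by $O(1)\cdot\exp(-2(c - c_2)\gamma^{-2r/5})$, and a union bound over $k \le \exp(c_1\gamma^{-2r/5})$ queries completes the proof once $c_1,c_2$ are chosen small enough relative to $c$ (e.g.\ $c_1 = c_2 = c/3$). The main obstacle is the bookkeeping in the second step: verifying the block factorization of $\widehat{D_b^y}$ into products of Fourier coefficients of $P_1$ and $P_{-1}$, and in particular that the sign induced by $y=-1$ is the same in $D_0$ and $D_1$ so that it cancels inside $\alpha_S$. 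Once that factorization is in hand, the variance calculation and Chebyshev step are routine.
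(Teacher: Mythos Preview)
Your proposal is correct and follows essentially the same route as the paper: Fourier-expand the query difference as a function of $a$, use the product structure of $D_b^y$ together with Lemma~\ref{lem:indisting-dist}(\ref{itm:closeness}) to bound each Fourier coefficient, apply Parseval and Chebyshev for a single query, then union bound. The only cosmetic difference is that the paper first splits $h$ into $h_1$ and $h_{-1}$ and handles the two label-conditionals separately (its Lemma~\ref{lem:chebishev-usage}), whereas you keep the expectation over $y$ throughout; your verification that the $y$-dependent sign $y^{|S|}$ is common to $\widehat{D_0^y}(S)$ and $\widehat{D_1^y}(S)$ is exactly what makes this unified treatment go through.
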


The proof will be presented in Section~\ref{sec:pr-lem-hardness}.
Next, we define the exact statistical query setting: define the number of allowed queries $k$ and tolerance $\tau$ to ensure that the algorithm cannot distinguish between $(f_{a,0},D_{a,0})$ and $(f_{a,1},D_{a,1})$: $k = \exp(c_1 \gamma^{-2a/5})$ and $\tau = \exp(-c_2 \gamma^{-2a/5})$, for the constants $c_1,c_2$ from Lemma~\ref{lem:many-statQ-hardness}.
We define the SQ oracle such that it gives the same answers to $(f_{a,0},D_{a,0})$ and $(f_{a,1},D_{a,1})$ for most $a$: given a statistical query $h$, it acts as follows:
\begin{itemize}
	\item If the true distribution-function pair is $(f_{a,0},D_{a,0})$ for some $a \in \{-1,1\}^{2d}$ then return the true value $h(f_{a,0},D_{a,0})$.
	\item If the pair is $(f_{a,1},D_{a,1})$ and $|h(f_{a,0},D_{a,0}) - h(f_{a,1},D_{a,1})| \le \tau$ then return $h(f_{a,0},D_{a,0})$.
	\item Otherwise return $h(f_{a,1},D_{a,1})$.
\end{itemize}
To conclude the proof, recall that
Lemma~\ref{lem:disc-char} states that for nearly all values of $x$, $f_{a,0}(x)\ne f_{a,1}(x)$. In particular, if one cannot distinguish between these two functions, then they cannot know the true classification of $x$. There some delicacy that should be taken care of: if the total variation distance between $D_{a,0}$ and $D_{a,1}$ was large, it would have been possible, given $x$, to guess whether it was drawn from $D_{a,0}$ or $D_{a,1}$ with a non-negligible success probability. However, Lemma~\ref{lem:disc-char} ensures that this is not the case. The formal proof is presented below:

\begin{proof}[Proof of Theorem~\ref{thm:sq}]
	We start by assuming that the algorithm is deterministic and then extend to randomized algorithms. From this assumption it follows that the statistical queries $h_1,\dots,h_k$ are deterministic as well.
	Fix $a$ such that the responses of the oracle to $(D_{a,0},D_{a,1})$ are the same as for $(D_{a,1},f_{a,1})$. From Lemma~\ref{lem:many-statQ-hardness} and from the definition of the oracle, nearly all $a$ are such. For these $a$, the algorithm has to learn some hypothesis without knowing if the true distribution-function pair is $(D_{a,0},f_{a,0})$ or $(D_{a,1},f_{a,1})$. Let $\A_{a,b}$ denote the learned hypothesis given $(f_{a,b},D_{a,b})$. For these hard values of $a$, $\A_{a,0} = \A_{a,1}$. Let $\eta' := C\eta$, where $C$ is the constant from Lemma~\ref{lem:disc-char}.
	Applying Lemma~\ref{lem:disc-char} multiple times, we obtain that for any such $a$:
	\begin{align}
	&\Pr_{x \sim D_{a,0}}[\A_{a,0}(x) = f_{a,0}(x)]
	+ \Pr_{x \sim D_{a,1}}[\A_{a,1}(x) = f_{a,1}(x)]\notag\\
	&=\Pr_{x \sim D_{a,0}}[\A_{a,0}(x) = f_{a,0}(x)]
	+ \Pr_{x \sim D_{a,1}}[\A_{a,0}(x) = f_{a,1}(x)]\notag\\
	&\le\Pr_{x \sim D_{a,0}}[\A_{a,0}(x) = f_{a,0}(x)]
	+ \Pr_{x \sim D_{a,1}}[\A_{a,0}(x) \ne f_{a,0}(x)] \\
	&+\Pr_{x \sim D_{a,1}}[f_{a,1} = f_{a,0}(x)]\notag\\
	&\le\Pr_{x \sim D_{a,0}}[\A_{a,0}(x) = f_{a,0}(x)]
	+ \Pr_{x \sim D_{a,1}}[\A_{a,0}(x) \ne f_{a,0}(x)]
	+\eta'\notag\\
	&\le\Pr_{x \sim D_{a,0}}[\A_{a,0}(x) = f_{a,0}(x)]
	+ \Pr_{x \sim D_{a,0}}[\A_{a,0}(x) \ne f_{a,0}(x)]
	+2\eta'\label{eq:24}\\
	&= 1 + 2\eta'.\notag
	\end{align}
	where \eqref{eq:24} follows from the fact that $\dtv(D_{a,0},D_{a,1}) \le \eta'$.
	From Lemma~\ref{lem:many-statQ-hardness}, the above holds for a $1-\exp(-c \gamma^{2r/5})$-fraction of the values of $a$ (where $c>0$ depends only on $r$). In particular,
	\begin{equation} \label{eq:25}
	\Pr_{\substack{a\sim \unif(\{-1,1\}^{2d})\\b\sim \unif(\{0,1\}), x\sim D_{a,b}}}[\A_{a,b}(x) = f_{a,b}(x)] \le 1/2 + C' \eta',
	\end{equation}
	where $C'$ depends only on $r$.
	Lastly, assume that the algorithm is randomized. Any randomized algorithm is just a distribution over deterministic algorithms, hence \eqref{eq:25} will hold even if the algorithm is allowed to be randomized and the probability is taken over $a,b,x$ and the randomness of the algorithm.
\end{proof}

\subsection{Proof of Lemma~\ref{lem:indisting-dist}} \label{sec:pr-two-dist}

\paragraph{Notation.}
Throughout the proof we will use the following parameters: $\eta = \gamma^{1-r}$, $\gamma' = \gamma^{1-2r/5}$ and $k = \lfloor(\eta/\gamma')^{2/3}\rfloor = \lfloor\gamma^{-2r/5}\rfloor$. From the assumptions in Theorem~\ref{thm:sq}, $\gamma, \eta, \gamma' \in (0,1/2]$.

\paragraph{Outline.}
The first step is to find two distributions over $\mathbb{R}$ of a particular shape that their first $k$ moments match. The first distribution $P$ is a mixture that samples $0$ with probability $1-\eta$ and an exponential random variable with probability $\eta$. By calculating the orthogonal polynomials of $P$ and applying Theorem~\ref{thm:moment-match}, we find a distribution $Q$ that matches the first $k$ moments of $P$, and additionally, $\Pr_Q[-\gamma'] \ge 1 - O(\eta)$.

In the second step, we shift, scale and condition $P$ and $Q$, to obtain two distributions $P'$ and $Q'$ that have nearly matching moments and satisfy the following conditions: $\Pr_{P'}[\gamma] \ge 1-O(\eta)$; $\Pr_{Q'}[-\gamma] \ge 1 - O(\eta)$; $P'$ is supported on $[\gamma,1/2]$ and $Q'$ is supported on $[-1/2,1/2]$.

In the third step, we use $P'$ and $Q'$ to generate $P_1$ and $P_{-1}$, respectively. To generate $x = (x_1,\dots,x_d) \sim P_{-1}$, we first draw $p \sim Q'$ and then, conditioned on $p$, we draw each $x_i$ i.i.d. from the distribution over $\{-1,1\}$ with expectation $p$. The distribution $P_1$ is similarly defined using $P'$, except that we additionally condition on the high-probability event that $\sum_i x_i/d \ge \gamma/2$. It follows from a simple argument that the Fourier coefficients satisfy $\hat{P_{-1}}(S) = \E_{p \sim Q'}[p^{|S|}]$ and similarly, $\hat{P_1}(S) \approx \E_{p \sim P'}[p^{|S|}]$. We obtain that all Fourier coefficients of $P_1$ and $P_{-1}$ nearly match.

Lastly, we claim that $\dtv(P_1,-P_{-1}) \le O(\eta)$. To obtain this, first note that $\dtv(P',-Q') \le O(\eta)$, as both $P'$ and $-Q'$ have $1-O(\eta)$ mass on $\gamma$. As $P_1$ and $P_{-1}$ are obtained from $P'$ and $Q'$ using nearly the same transformation, we can apply the \emph{data processing inequality} to bound $\dtv(P_1,P_{-1}) \lesssim \dtv(P',Q')$.

We divide the proof into four parts, according to the steps described above.

\subsubsection*{Step 1: Distributions $P$ and $Q$ over $\mathbb{R}$ that match the first moments}
We start by constructing two distributions over $\mathbb{R}$ with matching first $2k$ moments. Let distribution $P$ be the following mixture: with probability $\eta$ sample from the exponential distribution with parameter $1$, and with probability $1-\eta$ sample $0$. We start with the following lemma:
\begin{lemma} \label{lem:match-moments-R}
	There exists a distribution $Q$ that matches the first $2k$ moments of $P$ and additionally, $\Pr_{x\sim Q}[x = -\gamma'] \ge 1 - C\eta$, where $C > 0$ is a universal constant.
\end{lemma}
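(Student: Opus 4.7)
The plan is to apply Theorem~\ref{thm:moment-match}: the lemma follows once I establish that the Christoffel function $\rho_k$ of $P$, defined in~\eqref{eq:def-rho}, satisfies $\rho_k(-\gamma') \ge 1 - C\eta$. Equivalently, letting $p_0, p_1, \ldots$ denote the orthonormal polynomials of $P$ and $K(x,y) = \sum_{i=0}^k p_i(x) p_i(y)$ the reproducing kernel for polynomials of degree at most $k$ in $L^2(P)$, I need to show $K(-\gamma',-\gamma') \le 1 + C\eta$.

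Rather than constructing the $p_i$ directly, I exploit the mixture structure $P = (1-\eta)\delta_0 + \eta E$, where $E$ is the standard exponential distribution; its orthonormal polynomials are the classical Laguerre polynomials $\ell_j$, which are available in closed form and satisfy $\ell_j(0)=1$. The Gram matrix of $\{\ell_0,\ldots,\ell_k\}$ under $\langle\cdot,\cdot\rangle_P$ is therefore the rank-one perturbation $G = (1-\eta)\mathbf{1}\mathbf{1}^T + \eta I$. Sherman--Morrison gives $G^{-1} = \eta^{-1} I - ((1-\eta)/(\eta D))\mathbf{1}\mathbf{1}^T$ with $D = \eta + (1-\eta)(k+1)$; combined with the identity $K(x,x) = \phi(x)^T G^{-1}\phi(x)$ for $\phi(x) = (\ell_0(x),\ldots,\ell_k(x))$, this yields the closed form
\[
K(-\gamma',-\gamma') \;=\; \frac{1}{\eta}\Bigl(T - \frac{1-\eta}{D}\,S^2\Bigr), \quad S=\sum_{j=0}^k \ell_j(-\gamma'),\quad T=\sum_{j=0}^k \ell_j(-\gamma')^2.
\]

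The main computation is then to estimate $T - (1-\eta)S^2/D$. Using the series $L_j(-\gamma') = \sum_{m=0}^j \binom{j}{m}(\gamma')^m/m!$ in the regime $k\gamma' \le \gamma^{1-4r/5} \ll 1$ guaranteed by our parameter choice, I have $\ell_j(-\gamma') = 1 + j\gamma' + O(j^2(\gamma')^2)$ uniformly in $j\le k$. The key cancellation is the variance-type identity $T - S^2/(k+1) = (k+1)\operatorname{Var}_{j\sim\unif(\{0,\ldots,k\})}[\ell_j(-\gamma')] = (\gamma')^2 k(k+1)(k+2)/12 + O(\eta^2)$, combined with the algebraic identity $1/(k+1) - (1-\eta)/D = \eta/((k+1)D)$ and the bound $S^2/(k+1) = (k+1) + O(\gamma' k(k+1))$. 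Together these give $T - (1-\eta) S^2/D = (\gamma')^2 (k+1)^3/12 + \eta + O(\eta^2)$, so
\[
K(-\gamma',-\gamma') \;=\; 1 + \frac{(\gamma')^2 (k+1)^3}{12\,\eta} + O(\eta).
\]
The exponents in $k = \lfloor\gamma^{-2r/5}\rfloor$, $\gamma' = \gamma^{1-2r/5}$, $\eta = \gamma^{1-r}$ are tuned precisely so that $(\gamma')^2 k^3/\eta = \gamma^{(2-4r/5)+(-6r/5)-(1-r)} = \gamma^{1-r} = \eta$, hence $K(-\gamma',-\gamma') \le 1 + C\eta$ and $\rho_k(-\gamma') \ge 1 - C\eta$ as required.

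The main technical challenge is controlling the higher-order Taylor-expansion terms in $\ell_j(-\gamma')$ and verifying that they collapse into the overall $O(\eta^2)$ contribution to $T-(1-\eta)S^2/D$. This relies crucially on the smallness of $k\gamma'$, which is precisely why the exponent $2r/5$ was chosen in the definitions of $k$ and $\gamma'$: it balances the leading variance term $(\gamma')^2 k^3/\eta$ to equal $\eta$ exactly, while keeping subsequent correction terms of order $(k\gamma')^p\,\eta$ for $p \ge 1$ strictly smaller.
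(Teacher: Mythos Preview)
Your proposal is correct and reaches the same conclusion as the paper, but by a genuinely different route.

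The paper proves $\rho_k(-\gamma')\ge 1-C\eta$ by first \emph{explicitly computing} the orthonormal polynomials $p_m$ of $P$: it writes each $p_m$ as a specific linear combination of Laguerre polynomials (with normalizing constant $\mu$), extracts a closed form for the coefficients via a binomial identity, and then bounds $|p_m(x)|$ coefficient-by-coefficient to show $\sum_{m=1}^k p_m(x)^2 \le C\eta$ whenever $|x|\le \eta k^{-3/2}$ (its Lemma~\ref{lem:calc-rho}). Your approach bypasses the explicit $p_m$ entirely: you stay in the Laguerre basis, observe that the Gram matrix under $P$ is the rank-one perturbation $(1-\eta)\mathbf{1}\mathbf{1}^T+\eta I$, invert it by Sherman--Morrison, and reduce everything to the two scalars $S=\sum_j L_j(-\gamma')$ and $T=\sum_j L_j(-\gamma')^2$, which you then control by Taylor expansion of $L_j$ near zero.

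The two arguments are essentially dual: the paper orthonormalizes first and then evaluates, you evaluate first and then invert the Gram matrix. Your route is arguably cleaner and more structural---the rank-one form of $G$ explains \emph{why} the calculation works and would generalize immediately to other mixtures $(1-\eta)\delta_a + \eta E$---while the paper's route has the side benefit of producing the $p_m$ in closed form. Both land on the same threshold $(\gamma')^2 k^3 \le \eta^2$ (the paper phrases it as $|x|\le \eta k^{-3/2}$), and both ultimately rely on the same smallness of $k\gamma'$ to control higher-order terms.
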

Before proving this lemma, we give some intuition:
By Theorem~\ref{thm:moment-match}, it suffices to show that $\rho_k(-\gamma') \ge 1-O(\eta)$, where $\rho_k$ is as defined in Section~\ref{sec:prel-moment} with respect to the moments of $P$. The same theorem implies that since $\Pr_P[0] =1-\eta$, then $\rho_k(0) \ge 1-\eta$; and since $\rho_k$ is continuous, $\rho_k(-y) \ge 1-O(\eta)$ for any sufficiently small $y$. To show that $\rho_k(-\gamma') \ge 1 - O(\eta)$, we calculate the orthogonal polynomials of $P$ as linear combinations of the Laguerre polynomials, the orthogonal polynomials for the exponential distribution. Recall that $\rho_k$ is defined as a function of these polynomials, which allows us to bound $\rho_k$.

First, we present the orthogonal polynomials of the exponential distribution:
\begin{lemma}[\cite{spencer2015classical}, Chapter 3]
	The orthogonal polynomials for the exponential distribution with parameter $1$ are the Laguerre polynomials
	\[
	L_m(x) = \sum_{i=0}^m \binom{m}{i} \frac{(-1)^i}{i!} x^i.
	\]
\end{lemma}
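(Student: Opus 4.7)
The plan is to verify directly the two defining properties of orthogonal polynomials: (i) $L_m$ has degree $m$ (with the sign convention adjusted if necessary), and (ii) $\int_0^\infty L_m(x) L_\ell(x) e^{-x} dx = \mathds{1}_{m=\ell}$. Property (i) is immediate from the explicit formula, where the leading coefficient is $(-1)^m/m!$; for odd $m$ one must multiply by $-1$ to match the ``positive leading coefficient'' convention in the paper's definition, but this affects neither orthogonality nor the function $\rho_k$ from \eqref{eq:def-rho} since $\rho_k$ depends only on $p_i(x)^2$. So the substantive content is property (ii).

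The cleanest route is via the Rodrigues-type formula
\[
L_m(x) \;=\; \frac{e^x}{m!}\,\frac{d^m}{dx^m}\bigl(x^m e^{-x}\bigr).
\]
First I would verify this identity from the explicit sum in the lemma by applying the Leibniz rule to $\frac{d^m}{dx^m}(x^m e^{-x})$: expanding gives $\sum_{i=0}^m \binom{m}{i}\frac{m!}{i!}x^i (-1)^i e^{-x}$, which after dividing by $m!\,e^{-x}$ matches the stated formula exactly.

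Given the Rodrigues form, orthogonality reduces to a standard integration-by-parts argument. For $\ell < m$, write
\[
\int_0^\infty L_m(x)\, L_\ell(x)\, e^{-x}\,dx \;=\; \frac{1}{m!}\int_0^\infty L_\ell(x)\,\frac{d^m}{dx^m}\bigl(x^m e^{-x}\bigr)\,dx,
\]
then integrate by parts $m$ times. The boundary terms vanish at $0$ (because $x^m e^{-x}$ and its lower-order derivatives all contain a factor of $x$) and at $\infty$ (because of the exponential decay), so the integral becomes $\frac{(-1)^m}{m!}\int_0^\infty L_\ell^{(m)}(x)\, x^m e^{-x}\,dx$. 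Since $\deg L_\ell = \ell < m$, the integrand is identically zero. For the normalization ($m=\ell$), the same integration by parts yields $L_m^{(m)}(x) = (-1)^m$ and $\int_0^\infty x^m e^{-x}\,dx = m!$, giving
\[
\int_0^\infty L_m(x)^2\, e^{-x}\,dx \;=\; \frac{(-1)^m}{m!}\cdot (-1)^m \cdot m! \;=\; 1.
\]

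The only delicate point is justifying the vanishing of the boundary terms in the repeated integration by parts, which just requires noting that every derivative $\frac{d^j}{dx^j}(x^m e^{-x})$ for $0 \le j < m$ is of the form (polynomial divisible by $x^{m-j}$) times $e^{-x}$, hence vanishes at $0$, and decays exponentially at $\infty$. Since the lemma is cited verbatim from a textbook, I do not expect any conceptual obstacle; the verification is purely mechanical once the Rodrigues representation is in hand.
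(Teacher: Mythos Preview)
Your proof is correct and follows the standard textbook route via the Rodrigues formula and repeated integration by parts. The paper itself does not prove this lemma at all---it simply cites it from \cite{spencer2015classical}---so there is nothing to compare; your handling of the sign-of-leading-coefficient issue is also appropriate, and indeed the paper acknowledges the same ``up to sign changes'' ambiguity when it later verifies the formula for $p_m$.
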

Using a simple calculation, one obtains that the orthogonal polynomials $\{p_m\}_{m=0}^\infty$ for $P$ equal
\[
p_m(x) = \mu \left((m+\eta/(1-\eta))L_m -\sum_{\ell=0}^{m-1} L_\ell(x)\right),
\]
where
\[
\mu^{-2} = \eta(m+\eta/(1-\eta))^2 +  \eta m+ \eta^2/(1-\eta) = \eta (m^2+m) + O(m \eta^2).
\]
is a normalizing constant.
To verify this formula it suffices to check that $\E_P[p_m(x) L_\ell(x)] = 0$ for $\ell < m$ and that
$\E_P[p_m(x)^2] = 1$ and these equations uniquely define $p_m$ (up to sign changes).

To get a closed form equation of the orthogonal polynomial, we use the identity
\[
\sum_{\ell=i}^{m-1} \binom{\ell}{i} = \binom{m}{i+1},
\]
to obtain that
\begin{align*}
\sum_{\ell=0}^{m-1} L_\ell(x)
= \sum_{\ell=0}^{m-1} \sum_{i=0}^{\ell} \binom{\ell}{i} \frac{(-1)^i}{i!} x^i
= \sum_{i=0}^{m-1} \sum_{\ell=i}^{m-1} \binom{\ell}{i} \frac{(-1)^i}{i!} x^i
= \sum_{i=0}^{m-1} \binom{m}{i+1}\frac{(-1)^i}{i!} x^i,
\end{align*}
hence
\begin{equation} \label{eq:orth-compact}
p_m(x)/\mu = \sum_{i=0}^{m} \left((m+\eta/(1-\eta))\binom{m}{i} - \binom{m}{i+1}\right)\frac{(-1)^i}{i!} x^i,
\end{equation}
where $\binom{m}{m+1}=0$.

Using the above formula, we can prove the following bound on $\rho_k(x)$:
\begin{lemma} \label{lem:calc-rho}
	Assume that $|x| \le \eta k^{-3/2}$. Then, $\rho_k(x) \ge 1-C\eta$, where $C>0$ is a universal constant.
\end{lemma}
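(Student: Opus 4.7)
The plan is to bound $\rho_k(x)^{-1} = \sum_{m=0}^k p_m(x)^2$ from above by $1 + C\eta$. Since $p_0 \equiv 1$ (the constant orthonormal polynomial), it contributes exactly $1$, so it suffices to show that $\sum_{m=1}^k p_m(x)^2 = O(\eta)$. I will split each term as $p_m(x)^2 \le 2 p_m(0)^2 + 2(p_m(x) - p_m(0))^2$ and bound the two resulting sums separately.

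For the first sum, evaluate $p_m$ at $x = 0$ using the identity $L_\ell(0) = 1$ in the defining formula: this gives $p_m(0) = \mu_m \cdot \eta/(1-\eta)$. Combined with the stated normalization $\mu_m^{-2} = \eta m(m+1)(1 + O(\eta))$ (which follows by expanding the expression preceding \eqref{eq:orth-compact}), I get $p_m(0)^2 = \eta/(m(m+1)) \cdot (1 + O(\eta))$. Summing telescopically, $\sum_{m=1}^k p_m(0)^2 = \eta \sum_{m=1}^k (1/m - 1/(m+1))(1+O(\eta)) \le \eta(1 + O(\eta))$.

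For the deviation term, I plan to use the explicit expansion $q_m(x) := p_m(x)/\mu_m = (m + \eta/(1-\eta)) L_m(x) - \sum_{\ell=0}^{m-1} L_\ell(x)$, from which $q_m(x) - q_m(0) = (m + \eta/(1-\eta))(L_m(x) - 1) - \sum_{\ell=0}^{m-1}(L_\ell(x) - 1)$. The crude termwise bound $|L_\ell(x) - 1| \le \sum_{i \ge 1} \binom{\ell}{i}|x|^i/i! \le \sum_{i\ge 1}(\ell|x|)^i/(i!)^2$ yields $|L_\ell(x) - 1| = O(\ell|x|)$ whenever $\ell|x| \le 1$. Our hypothesis $|x| \le \eta k^{-3/2}$ together with $\ell \le m \le k$ gives $\ell|x| \le \eta/\sqrt{k} \le 1$, so the triangle inequality produces $|q_m(x) - q_m(0)| = O(m^2|x|)$. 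Multiplying by $\mu_m^2 = O(1/(\eta m^2))$ gives $(p_m(x) - p_m(0))^2 = O(m^2 x^2/\eta)$.

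Finally, summing $\sum_{m=1}^k m^2 x^2/\eta = O(k^3 x^2/\eta) = O(\eta)$ by the assumption $|x| \le \eta k^{-3/2}$. Combining with the first sum, $\sum_{m=1}^k p_m(x)^2 = O(\eta)$, whence $\rho_k(x) \ge 1/(1 + O(\eta)) \ge 1 - O(\eta)$. The main bookkeeping concern is verifying that the coefficient bound $|q_m(x) - q_m(0)| = O(m^2|x|)$ is tight enough that after multiplying by $\mu_m^2 \asymp 1/(\eta m^2)$ and summing over $m \le k$, the accumulated $m^2$ factor gives $\Theta(k^3)$; this is precisely matched by the $\eta k^{-3/2}$ threshold on $|x|$ in the hypothesis, and it is what pins down the specific scaling used in the lemma.
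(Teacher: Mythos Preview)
Your proof is correct and follows essentially the same approach as the paper. The only cosmetic difference is that you work directly with the Laguerre-sum representation $q_m=(m+\eta/(1-\eta))L_m-\sum_{\ell<m}L_\ell$ and bound each $|L_\ell(x)-1|$, whereas the paper first passes to the compact coefficient formula \eqref{eq:orth-compact} and bounds the coefficients $\xi_{m,i}$; both routes yield the identical estimate $|p_m(x)|\le O(\sqrt{\eta}/m)+O(m|x|/\sqrt{\eta})$ and the same $(a+b)^2\le 2a^2+2b^2$ split.
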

\begin{proof}
	We start by bounding the coefficients of $p_m(x)$. Denote $p_m(x) = \sum_{i=0}^m \xi_{m,i} x^i$.
	For any $i \le m$, we use that fact that
	\[
	\binom{m}{i+1} = \frac{m-i}{i+1}\binom{m}{i}
	\le (m+\eta/(1-\eta))\binom{m}{i}
	\]
	and \eqref{eq:orth-compact} to estimate
	\[
	|\xi_{m,i}| \le \mu (m+\eta/(1-\eta)) \binom{m}{i}/i!.
	\]
	Additionally, it follows from definition of $\mu$ that
	\begin{equation} \label{eq:mu-estimate}
	\mu \le \frac{1}{\sqrt{\eta}(m+\eta/(1-\eta))},
	\end{equation}
	hence
	\[
	|\xi_{m,i}| \le \binom{m}{i}\frac{1}{i! \sqrt{\eta}}.
	\]
	For $i = 0$ and $m > 0$ we can get a tighter bound using more accurate calculation, \eqref{eq:mu-estimate} and $\eta \le 1/2$:
	\[
	|\xi_{m,0}| = \frac{\mu \eta}{1-\eta}
	\le \frac{\sqrt{\eta}}{m(1-\eta)}
	\le \frac{2\sqrt{\eta}}{m}.
	\]
	
	We proceed with bounding $p_m$ for $m >0$, using the inequality $\binom{m}{i} \le m^i/i!$:
	\[
	|p_m(x)|
	\le \sum_{i=0}^m |\xi_{m,i}| x^i
	\le \frac{2\sqrt{\eta}}{m} + \sum_{i=1}^m \binom{m}{i} \frac{x^i}{i!\sqrt{\eta}}
	\le \frac{2\sqrt{\eta}}{m} + \sum_{i=1}^m \frac{(mx)^i}{(i!)^2 \sqrt{\eta}}.
	\]
	For any $1 \le m \le k$, by the requirement of this lemma, $mx \le kx \le \eta/\sqrt{k} \le 1$, hence
	\[
	|p_m(x)|
	\le \frac{2\sqrt{\eta}}{m} + \sum_{i=1}^\infty \frac{mx}{(i!)^2 \sqrt{\eta}}
	\le \frac{2\sqrt{\eta}}{m} + \frac{C mx}{\sqrt{\eta}}
	\le \frac{2\sqrt{\eta}}{m} + C\sqrt{\frac{\eta}{k}},
	\]
	where $C>0$ is a universal constant. Using the inequality $(a+b)^2 \le 2a^2 + 2b^2$ we obtain that
	\[
	\sum_{m=1}^k p_m(x)^2
	\le \sum_{m=1}^k \left(\frac{8\eta}{m^2} + \frac{2C^2\eta}{k}\right)
	= C' \eta,
	\]
	for a universal $C' > 0$.
	Since $p_0 \equiv 1$, we get that
	\[
	\rho_k(x) = \frac{1}{\sum_{m=0}^k p_m(x)^2} \ge \frac{1}{1+C'\eta} \ge 1 - C'\eta,
	\]
	as required.
\end{proof}
By definition, $\gamma' \le \eta/k^{3/2}$.
Combining Lemma~\ref{lem:calc-rho} and Theorem~\ref{thm:moment-match}, the proof of Lemma~\ref{lem:match-moments-R} concludes.

\subsubsection*{Step 2: Re-scaling and restricting $P$ and $Q$ to obtain $P'$ and $Q'$}
For any $\alpha,\beta \in \mathbb{R}$, let $\alpha P+\beta$
denote the distribution obtained in the obvious manner, by drawing $x \sim P$ and outputting $\alpha x + \beta$. In the same fashion, let
$P'$ denote the distribution $(P+\gamma'/2)/(8k+1)$ conditioned on $[-1/2,1/2]$ and $Q'$ denote $(Q+\gamma'/2)/(8k+1)$ conditioned on $[-1/2,1/2]$. Let $\tg = \gamma'/(16k+2)$ and note that $\tg = \Theta(\gamma)$. The following holds with respect to $P'$ and $Q'$:
\begin{lemma} \label{lem:condition-on-half}
	The following holds:
	\begin{itemize}
		\item $P'(\tg)\ge 1-C\eta$ and $Q'(-\tg) \ge 1-C\eta$.
		\item For any integer $i \ge 0$, $|E_{P'}[x^i] - \E_{Q'}[x^i]| \le e^{-ck}$.
	\end{itemize}
	(where $c,C>0$ are universal constants.)
\end{lemma}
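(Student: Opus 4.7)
The plan is to prove the claims by first working with the unconditioned shifted-and-rescaled distributions $\tp := (P+\gamma'/2)/(8k+1)$ and $\tq := (Q+\gamma'/2)/(8k+1)$, and then transferring the bounds to $P'$ and $Q'$ via the conditioning on $[-1/2,1/2]$.

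For the first claim, the atom of $P$ at $0$ has mass $1-\eta$, so $\tp$ places mass $1-\eta$ at $(\gamma'/2)/(8k+1)=\tg$; by Lemma~\ref{lem:match-moments-R}, $\tq$ places mass at least $1-C\eta$ at $(-\gamma'+\gamma'/2)/(8k+1)=-\tg$. Since $|\tg|\le 1/4<1/2$, both atoms survive the conditioning, and because the renormalizing factor is at most $1$, the atom masses only grow, giving $P'(\tg)\ge 1-\eta$ and $Q'(-\tg)\ge 1-C\eta$.

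For the second claim I split on the size of $i$ relative to $2k$. When $i>2k$, $P'$ and $Q'$ are both supported on $[-1/2,1/2]$, so $|\E_{P'}[X^i]-\E_{Q'}[X^i]|\le 2\cdot(1/2)^i\le 2^{-2k+1}=e^{-\Omega(k)}$. When $0\le i\le 2k$, I combine moment matching with tail control, using three ingredients: (a) $\tp$ and $\tq$ inherit matching first $2k$ moments from $P$ and $Q$, since the affine map $y\mapsto (y+\gamma'/2)/(8k+1)$ preserves moment matching up to that degree; (b) the bound $\Gamma(i+1,s)\le(i+1)s^ie^{-s}$ valid in the regime $s\ge i$ (which here holds with $s=(8k+1)/2\ge 2k\ge i$) gives $T_P:=\E_{\tp}[X^i\mathbf{1}_{|X|>1/2}]\le e^{-\Omega(k)}$, and specializing to $i=2k$ combined with Stirling applied to $(2k)!/(8k+1)^{2k}$ yields $\E_{\tp}[X^{2k}]\le e^{-\Omega(k)}$; (c) by (a), $\E_{\tq}[X^{2k}]=\E_{\tp}[X^{2k}]$, so Markov gives $T_Q:=\E_{\tq}[|X|^i\mathbf{1}_{|X|>1/2}]\le 2^{2k-i}\E_{\tp}[X^{2k}]=e^{-\Omega(k)}$ uniformly over $0\le i\le 2k$.

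Writing $p_P:=\Pr_{\tp}[|X|\le 1/2]$ and $p_Q$ analogously (both at least $1-e^{-\Omega(k)}$), the identity
\[
\E_{P'}[X^i]-\E_{Q'}[X^i] \;=\; \frac{\E_{\tp}[X^i]-\E_{\tq}[X^i]}{p_P} \;+\; \E_{\tq}[X^i]\!\left(\frac{1}{p_P}-\frac{1}{p_Q}\right) \;+\; \frac{T_Q}{p_Q}-\frac{T_P}{p_P}
\]
has vanishing first term (matching moments), and the three remaining terms are each $e^{-\Omega(k)}$ given the estimates above, together with the mild bound $|\E_{\tq}[X^i]|=|\E_{\tp}[X^i]|\le \E_{\tp}[|X|^i]\le 2$ (a consequence of $i!/(8k+1)^i\le 1$ for $0\le i\le 2k$). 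The main obstacle is the tail bound on $\tq$: since $Q$ is given only implicitly through the moment problem, Markov applied to the $2k$-th moment is essentially the only available tool, and it loses a multiplicative factor of $2^{2k-i}$. The proof hinges on the scaling denominator $8k+1$ being large enough that $\E_{\tp}[X^{2k}]$ is exponentially smaller than $4^{-k}$, so that this loss is absorbed.
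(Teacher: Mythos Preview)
Your proof is correct and follows essentially the same approach as the paper's: both use the affine invariance of moment matching, control the tail of $\tq$ via Markov's inequality on the shared $2k$-th moment, and handle the high moments via the $[-1/2,1/2]$ support bound. The paper splits at $i\le k$ versus $i>k$ (rather than at $2k$) and uses the cruder estimate $\E_P[x^{2k}]\le (2k)^{2k}$ in place of your incomplete-gamma bound, but these are cosmetic differences.
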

Before proceeding with the proof, here is an intuition: the first item follows from the definitions of $P$, $P'$ and $Q'$ and Lemma~\ref{lem:match-moments-R}. For the second second item,
note that the first $k$ moments of $P'$ and $Q'$ nearly match because $P$ and $Q$ match these moments, and the remaining moments nearly match since they are small, since $P'$ and $Q'$ are supported on $[-1/2,1/2]$. In the proof we argue that conditioning on $[-1/2,1/2]$ does not matter much, by obtaining tail bounds on $P$ and $Q$, using a generalized Markov's inequality based on their first $2k$ moments.

\begin{proof}[Proof of Lemma~\ref{lem:condition-on-half}]
	The first item follows from $P(0) = 1-\eta$, Lemma~\ref{lem:match-moments-R} which states that $Q(-\gamma')\ge 1-C\eta$, and from the definitions of $P',Q'$ and $\tg$.
	
	Next, we prove the second item of the lemma. First, it is an easy exercise to check that $(P+\gamma'/2)/(8k+1)$ and $(Q+\gamma'/2)/(8k+1)$ match the first $k$ moments, as $P$ and $Q$ do.
	Next, we argue that conditioning on $[-1/2,1/2]$ does not change the first $k$ moments considerably, which would imply that $P'$ and $Q'$ nearly match those moments. This is obtained by bounding the tails of $(P+\gamma'/2)/(8k+1)$ and $(Q+\gamma'/2)/(8k+1)$. For that purpose, note that moment $m$ of the exponential distribution equals $m!$, hence,
	\begin{equation}\label{eq:236}
	\E_P[x^{2k}]
	= \eta (2k)!
	\le (2k)^{2k}.
	\end{equation}
	Using $\gamma' \le 1$, Markov's inequality, the fact that $P$ and $Q$ match the first $2k$ moments and \eqref{eq:236}, we obtain that for any $t \ge 1/2$,
	\begin{align}
	&\Pr_{(Q+\gamma'/2)/(8k+1)}\left[|x| \ge t\right]
	\le \Pr_{Q}\left[|x| \ge t(8k+1)-\gamma'/2\right]\notag\\
	&\le \Pr_Q\left[|x| \ge 8kt\right]
	= \Pr_Q\left[x^{2k} \ge (8kt)^{2k}\right]\notag\\
	&\le \E_Q\left[x^{2k}\right]/(8kt)^{2k}
	= \E_P\left[x^{2k}\right]/(8kt)^{2k}
	\le (4t)^{-2k}. \label{eq:237}
	\end{align}
	Using \eqref{eq:237} it is simple to see that for any moment $m \in [k]$,
	\begin{equation} \label{eq:44}
	\left|\E_{(Q+\gamma'/2)/(8k+1)}[x^m] - E_{Q'}[x^m]\right| \le e^{-ck}
	\end{equation}
	for some universal constant $c > 0$. From definition of $P$, it is also easy to see that
	\begin{equation} \label{eq:45}
	\left|\E_{(P+\gamma'/2)/(8k+1)}[x^m] - E_{P'}[x^m]\right| \le e^{-ck}.
	\end{equation}
	\eqref{eq:44}, \eqref{eq:45} and the fact that $(Q+\gamma'/2)/(8k+1)$ and $(P+\gamma'/2)/(8k+1)$ match their first $2k$ moments, imply that $|\E_{Q'}[x^m] - \E_{P'}[x^m]| \le 2e^{-ck}$ for any $m \in [k]$.
	
	Lastly, it remains to argue that $P'$ and $Q'$ nearly match the moments $m > k$. Indeed,
	\[
	|\E_{Q'}[x^m] - \E_{P'}[x^m]| \le
	|\E_{Q'}[x^m]| + |\E_{P'}[x^m]|
	\le 2^{-m} + 2^{-m}
	\le 2^{-k+1},
	\]
	using the fact that $P'$ and $Q'$ are supported on $[-1/2,1/2]$.
\end{proof}

\subsubsection*{Step 3: moving from $\mathbb{R}$ to the Boolean cube}
Using the distributions $P'$ and $Q'$ we define distributions $P_B,Q_B$ over the boolean cube $\{-1,1\}^d$, %We will assume that $d \ge k/(\gamma')^2$, and
where $x \sim P_B$ is drawn as follows:
first, we draw $p \sim P'$. Conditioned on $p$, each bit $x_i$ is drawn independently such that $\E[x_i \mid p] = p$. Equivalently, $\Pr[x_i = 1] - \Pr[x_i=-1] = p$. Similarly, $Q_B$ is defined when $P'$ is replaced with $Q'$.
We obtain that for any set $S \subseteq [d]$,
\begin{align*}
\hat{P_B}(S)
=\E_{x \sim P_B}\left[\prod_{i\in S} x_i\right]
= \E_{p \sim P}\left[\E\left[\prod_{i\in S} x_i ~\middle|~ p\right]\right]
= \E_{p \sim P}\left[\prod_{i \in S} \E\left[x_i ~\middle|~ p\right]\right]
= \E_{p\sim P}\left[p^{|S|}\right],
\end{align*}
and similarly, $\hat{Q_B}(S) = \E_{Q}[p^{|S|}]$.
Hence, Lemma~\ref{lem:condition-on-half} implies that
$|\hat{P_B}(S) - \hat{Q_B}(S)| \le e^{-ck}$
for any $S \subseteq [d]$.

Notice that $P_B$ and $Q_B$ almost satisfy the requirements of Lemma~\ref{lem:indisting-dist} as $P_1$ and $P_{-1}$, however, it is required that $\sgn(\sum_i x_i) \ge \Omega(\gamma)$ for any $x \in \supp(P_1)$. Hence, we define the distribution $P'_B$ which equals $P_B$ conditioned on $\sum_i x_i/d \ge \tg/2 = \Omega(\gamma)$. The conditioning does not change the distribution considerably: since $p\sim P'$ always satisfies $p \ge \tg$, we obtain by Chernoff's inequality that
\begin{align} \label{eq:PB-concentrated}
\Pr_{x \sim P_B}\left[\frac{1}{d}\sum_i x_i \le \tg/2\right]
&= \E_{p\sim P}\left[ \Pr\left[\frac{1}{d}\sum_i x_i \le \tg/2 ~\middle|~ p\right] \right]\\
&\le e^{- d \tg^2/8}
\le e^{-kc},\notag
\end{align}
for a universal constant $c >0$, using the assumption $d \ge \Omega(k/\gamma^2) \ge \Omega (k/\tg^2)$. In particular, $|\hat{P_B}(S) - \hat{P'_B}(S)| \le e^{-ck}$ for any $S$, which implies, by the triangle inequality, that
\[
|\hat{Q_B}(S) - \hat{P'_B}(S)| \le
|\hat{Q_B}(S) - \hat{P_B}(S)|
+ |\hat{P_B}(S) - \hat{P'_B}(S)|
\le e^{-ck}
\]
for some other $c > 0$.
We set $P_1 = P'_B$ and $P_{-1} = Q_B$, and we have shown that these distributions satisfy statements \ref{itm:margin} and \ref{itm:closeness} of Lemma~\ref{lem:indisting-dist}.

\subsubsection*{Step 4: bounding the total variation between $P_1$ and $-P_{-1}$}

It remains to bound the total variation between $P_1$ and $-P_{-1}$.
Distributions $P_B$ and $Q_B$ are obtained from $P'$ and $Q'$ using the same transformation and thus, by the data processing inequality, 
we obtain that
\[
\dtv(P_B,-Q_B)
\le \dtv(P',-Q')
\le C \eta
\]
for some universal $C>0$, where the last inequality follows from the fact that $P'$ and $-Q'$ both have $1 - O(\eta)$ mass on $\tg$.
From \eqref{eq:PB-concentrated} and the definition of $P_B'$ it follows that $\dtv(P_B,P_B') \le e^{-ck}$, hence we get by the triangle inequality that
\begin{align*}
\dtv(P_1,-P_{-1})
= \dtv(P_B',-Q_B)
\le \dtv(P_B',P_B) + \dtv(P_B,-Q_B)
\le C \eta + e^{-ck}
\le C'(r)\eta,
\end{align*}
where $C'(r)$ is a constant that depends only on $r$, and the last inequality follows from the fact that $k$ and $1/\eta$ are polynomially related for a fixed $r$, hence $e^{-ck} \le C'(r)\eta$.
	
\subsection{Proof of Lemma~\ref{lem:disc-char}}\label{sec:pr-lem-D}

We start with proving statement~\ref{itm:D-TV} of the lemma. First, for any four probability distributions $P,P',Q,Q'$ defined on the same probability space, $\dtv(P\times Q, P'\times Q') \le \dtv(P,P') + \dtv(Q,Q')$, hence $\dtv(P_1 \times P_{-1}, (-P_{-1}) \times (-P_1)) \le 2\dtv(P_1,-P_{-1})$. Next, note that $D_0$ is obtained by drawing $x \sim P_1 \times P_{-1}$, drawing $y \sim \unif(\{-1,1\})$ and outputting $yx$, and $D_1$ is obtained from $(-P_{-1}) \times (-P_1)$ the same way. Hence, from the data processing inequality ,
\[
\dtv(D_0,D_1) \le \dtv(P_1 \times P_{-1}, (-P_{-1}) \times (-P_1)) \le 2\dtv(P_1,-P_{-1}).
\]
Next, note that $D_{a,0}$ is obtained from $D_0$ the same way that $D_{a,1}$ is obtained from $D_1$, hence, by the data processing inequality,
\[
\dtv(D_{a,0},D_{a,1})
\le \dtv(D_0,D_1)
\le 2\dtv(P_1,-P_{-1})
\le C\eta.
\]

For statement~\ref{itm:D-margin}, let $a' = (a_1,\dots,a_d,0,\dots,0)$. Then,
\begin{align*}
&\gamma(f_{a,0},D_{a,0})
\ge \inf_{x \in \supp(D_{a,0})} f_{a,0}(x) \frac{x^\top a'}{\|x\| \|a'\|}\\
&= \inf_{x \in \supp(D_0)} f_{a,0}(x) \frac{(x_1 a_1, \dots, x_{2d} a_{2d})^\top a'}{\|x\| \|a'\|} \\
&= \inf_{x \in \supp(D_0)} f_{a,0}(x) \frac{\sum_{i=1}^d x_i}{\|x\| \|a'\|}
=\inf_{x \in \supp(P_1)} \frac{\left|\sum_{i=1}^d x_i\right|}{\|x\| \|a'\|}\\
&= \inf_{x \in \supp(P_1)} \frac{\left|\sum_{i=1}^d x_i\right|}{\sqrt{2}d}
\ge \frac{C \gamma}{\sqrt{2}},
\end{align*}
where we used Lemma~\ref{lem:indisting-dist} for the last inequality. Similarly, we can lower bound $\gamma(f_{a,1},D_{a,1})$, using $a' = (0,\dots,0,a_{d+1},\dots,a_{2d})$.

Lastly, we prove statement~\ref{itm:D-pr}. To simplify notation, we will assume that $a = \mathbf{1}$ (the all-ones vector), however, it is simple to see that the statement holds for any $a$. Recall that $x \sim D_{\mathbf{1},0}$ is drawn by drawing $z_1 \sim P_1$, $z_{-1} \sim P_{-1}$, $y \sim \unif(\{-1,1\})$ and setting $x = (yz_1, yz_{-1})$. From section~\ref{itm:margin} of Lemma~\ref{lem:indisting-dist},
\begin{equation} \label{eq:f10}
f_{\mathbf{1},0}(x)
= \sgn\left(y\sum_{i = 1}^d (z_1)_i\right)
= y.
\end{equation}
From section~\ref{itm:TV} or Lemma~\ref{lem:indisting-dist}, $\dtv(P_1,-P_{-1}) \le C\eta$, which implies that with probability $1-C\eta$, $\sgn(\sum_{i=1}^d (z_{-1})_i) = -1$,
hence, with probability $1-C\eta$,
\begin{equation}\label{eq:f11}
f_{\mathbf{1},1}(x)
= \sgn\left(y\sum_{i = 1}^{d} (z_{-1})_i\right)
= -y.
\end{equation}
From \eqref{eq:f10} and \eqref{eq:f11}, the proof follows.
A similar statement holds when we replace $D_{\mathbf{1},0}$ with $D_{\mathbf{1},1}$, $D_{a,0}$ and $D_{a,1}$.
	
\subsection{Proof of Lemma~\ref{lem:many-statQ-hardness}} \label{sec:pr-lem-hardness}

We will be considering one statistical query.

\begin{lemma} \label{lem:one-statQ-hardness}
	Let $\theta := 2\max_{S \subseteq [d]} |\hat{P}_1(S) - \hat{P}_{-1}(S)|$. Fix a statistical query $h \colon \{-1,1\}^{2d} \times \{-1,1\}\to [-1,1]$. Then, for any $t > 0$,
	\[
	\Pr_{a \sim \unif(\{-1,1\}^{2d})}[|h(D_{a,0},f_{a,0}) - h(D_{a,1},f_{a,1})| \ge t]
	\le 2\theta^2/t^2.
	\]
\end{lemma}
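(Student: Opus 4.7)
The plan is to expand everything in the Boolean Fourier basis over $\{-1,1\}^{2d}$, reduce the difference to a Rademacher sum indexed by $a$, and then apply Chebyshev/Markov. First, I would introduce the two slices $h_+(x) := h(x,+1)$ and $h_-(x) := h(x,-1)$, both mapping $\{-1,1\}^{2d} \to [-1,1]$. Because $f_{a,0}(x)=y$ for every $x \in \supp(D_{a,0})$ and $f_{a,1}(x)=y$ for every $x \in \supp(D_{a,1})$ (as shown in the proof of Lemma~\ref{lem:disc-char}), conditioning on $y\sim \unif(\{-1,1\})$ gives
\[
h(D_{a,0},f_{a,0}) = \tfrac{1}{2}\E_{z_1\sim P_1,\,z_{-1}\sim P_{-1}}\bigl[\phi(a\cdot (z_1,z_{-1}))\bigr],
\]
where $\phi(u) := h_+(u) + h_-(-u)$ and $a\cdot u$ is the coordinate-wise product. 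The analogous expression for $h(D_{a,1},f_{a,1})$ has $(z_1,z_{-1})$ replaced by $(z_{-1},z_1)$ in the two $d$-blocks.

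Next, I would expand $\phi = \sum_S \hat\phi(S)\chi_S$ and use $\chi_S(a\cdot u) = \chi_S(a)\chi_S(u)$. The product structure of $P_1\times P_{-1}$ on the two $d$-blocks factorizes the $z$-expectation as $\hat{P_1}(S_1)\hat{P_{-1}}(S_2)$, where $S_1 = S\cap[d]$ and $S_2 = S\setminus[d]$ (viewed as subsets of $[d]$). Subtracting the two expressions and setting $\alpha(T) := \hat{P_1}(T) - \hat{P_{-1}}(T)$, so $|\alpha(T)|\le \theta/2$, a telescoping rearrangement yields
\[
X(a) := h(D_{a,0},f_{a,0}) - h(D_{a,1},f_{a,1}) = \tfrac{1}{2}\sum_{S} \hat\phi(S)\chi_S(a)\bigl(\alpha(S_1)\hat{P_{-1}}(S_2) - \hat{P_{-1}}(S_1)\alpha(S_2)\bigr).
\]
The $S=\emptyset$ term vanishes because $\hat{P_1}(\emptyset)=\hat{P_{-1}}(\emptyset)=1$, so $\E_a[X(a)]=0$.

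The rest is orthogonality and a norm bound. The bracketed factor is bounded by $|\alpha(S_1)|+|\alpha(S_2)|\le \theta$ since $|\hat{P_{-1}}|\le 1$. Orthonormality of $\{\chi_S(a)\}$ under uniform $a$ gives
\[
\E_a[X(a)^2] \le \tfrac{\theta^2}{4}\sum_S \hat\phi(S)^2 = \tfrac{\theta^2}{4}\|\phi\|_2^2,
\]
and $\|\phi\|_2^2 \le 2\|h_+\|_2^2 + 2\|h_-\|_2^2 \le 4$ because $h_\pm$ are $[-1,1]$-valued. Hence $\E_a[X(a)^2]\le \theta^2$, and Markov's inequality gives $\Pr_a[|X(a)|\ge t]\le \theta^2/t^2 \le 2\theta^2/t^2$.

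The only delicate step is the first one: recognizing that the label dependence $f_{a,b}(x)=y$ lets us replace $h$ by the single symmetrized function $\phi$, and that the action $x\mapsto a\cdot x$ on a product source pulls the character $\chi_S(a)$ cleanly out of the $z$-integral. Once this representation is in place, the bound is a textbook Parseval-plus-Chebyshev argument; no part of the proof needs anything beyond $|\alpha(T)|\le \theta/2$ and $|\hat{P_{\pm}}(T)|\le 1$.
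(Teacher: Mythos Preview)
Your proof is correct and follows essentially the same approach as the paper: Fourier-expand the query, factorize the expectation via the product structure of $P_1\times P_{-1}$, use orthogonality of the characters $\chi_S(a)$ under uniform $a$, apply Parseval, and finish with Chebyshev/Markov. The only organizational difference is that the paper treats the $y=+1$ and $y=-1$ slices separately (proving an intermediate lemma giving $\theta^2/t^2$ for each slice) and then combines them with a union bound to get $2\theta^2/t^2$, whereas you merge both slices into the single symmetrized function $\phi(u)=h_+(u)+h_-(-u)$ and obtain $\theta^2/t^2$ directly---a slightly tighter constant from a slightly cleaner packaging of the same argument.
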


Define the conditional distribution $D_{a,b\mid y=1}$ and as the conditional distribution of $x\sim D_{a,b}$ given $f_{a,b}(x)=1$ and similarly define $D_{a,b\mid y=-1}$. This enables us to decompose any statistical query $h$ in two: $h_1(x) = h(x,1)$ and $h_{-1}(x) = h(x,-1)$. Note that
\begin{align} \label{eq:21}
h(D_{a,b},f_{a,b})
&=\E_{x\sim D_{a,b}}[h(x,f_{a,b}(x))] \\
&= \Pr_{D_{a,b}}\left[f_{a,b}(x)=1\right]
\E_{D_{a,b\mid y=1}}\left[h_1(x)\right]\notag
+ \Pr_{D_{a,b}}\left[f_{a,b}(x)=-1\right]
\E_{D_{a,b\mid y=-1}}\left[h_{-1}(x)\right]\notag\\
&= \frac{1}{2} h_1(D_{a,b\mid y=1})
+ \frac{1}{2} h_{-1}(D_{a,b\mid y=-1}).\notag
\end{align}

Next, we present and prove a simple lemma:
\begin{lemma} \label{lem:fourier-tensorize}
	Let $P$, $P'$, $Q$ and $Q'$ be distributions over $\{-1,1\}^d$. Then, for any $S_1,S_2 \subseteq [d]$,
	\[
	|\widehat{P\times Q}(S_1,S_2) - \widehat{P' \times Q'}(S_1,S_2)|
	\le |\widehat{P}(S_1) - \widehat{Q}(S_1)| + |\widehat{P}(S_2) - \widehat{Q}(S_2)|.
	\]
\end{lemma}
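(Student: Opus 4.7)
The plan is to reduce the claim to a one-line triangle inequality once we observe that Fourier coefficients tensorize over products of independent distributions. Throughout I read $S_2 \subseteq [d]$ on the second block of coordinates, so that $(S_1, S_2)$ indexes a subset of $[2d]$, and I interpret the intended right-hand side as $|\widehat{P}(S_1) - \widehat{P'}(S_1)| + |\widehat{Q}(S_2) - \widehat{Q'}(S_2)|$ (the statement as printed appears to swap labels).

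\textbf{Step 1: Tensorization.} First I would use independence to check that for any distribution $R \times S$ on $\{-1,1\}^d \times \{-1,1\}^d$ and any $S_1, S_2 \subseteq [d]$,
\[
\widehat{R \times S}(S_1, S_2) = \E_{(x,y)\sim R\times S}\!\left[\chi_{S_1}(x)\chi_{S_2}(y)\right] = \widehat{R}(S_1)\, \widehat{S}(S_2),
\]
by the definition \eqref{eq:def-Fourier-dist} applied in $2d$ variables. Applying this to both $P \times Q$ and $P' \times Q'$ rewrites the left-hand side as $|\widehat{P}(S_1)\widehat{Q}(S_2) - \widehat{P'}(S_1)\widehat{Q'}(S_2)|$.

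\textbf{Step 2: Telescoping.} Next I would add and subtract $\widehat{P'}(S_1)\widehat{Q}(S_2)$ to split the difference of products:
\[
\widehat{P}(S_1)\widehat{Q}(S_2) - \widehat{P'}(S_1)\widehat{Q'}(S_2)
= \bigl(\widehat{P}(S_1) - \widehat{P'}(S_1)\bigr)\widehat{Q}(S_2) + \widehat{P'}(S_1)\bigl(\widehat{Q}(S_2) - \widehat{Q'}(S_2)\bigr).
\]
Taking absolute values and using the triangle inequality together with the trivial bound $|\widehat{R}(T)| = |\E_{x \sim R}\chi_T(x)| \le 1$ (which holds for any probability distribution $R$ since $\chi_T$ takes values in $\{-1,1\}$) yields the desired inequality.

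\textbf{Obstacles.} There is essentially no obstacle; the content is purely bookkeeping. The only point that requires a moment of care is making sure the indexing convention for subsets of $[2d]$ is consistent between the product distribution and the Fourier basis, and noting the uniform bound $|\widehat{R}(T)| \le 1$ so that the cross terms drop out cleanly.
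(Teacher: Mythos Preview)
Your proposal is correct and follows essentially the same approach as the paper: factorize the Fourier coefficient of the product distribution, telescope the difference of products, and use $|\widehat{R}(T)|\le 1$. The paper telescopes through $\widehat{P}(S_1)\widehat{Q'}(S_2)$ rather than your $\widehat{P'}(S_1)\widehat{Q}(S_2)$, an immaterial difference, and you are right that the printed right-hand side has swapped labels; the paper's own proof confirms the intended bound is $|\widehat{P}(S_1)-\widehat{P'}(S_1)|+|\widehat{Q}(S_2)-\widehat{Q'}(S_2)|$.
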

\begin{proof}
	Note that
	\begin{align*}
	\widehat{P\times Q}(S_1,S_2)
	= \E_{x_1\sim P,x_2\sim Q} [\chi_{S_1,S_2}(x_1,x_2)]
	= \E_{x_1\sim P} [\chi_{S_1}(x_1)]
	\E_{x_2\sim Q} [\chi_{S_2}(x_2)]
	= \widehat{P}(S_1) \widehat{Q}(S_2).
	\end{align*}
	Hence,
	\begin{align*}
	&|\widehat{P\times Q}(S_1,S_2)
	- \widehat{P'\times Q'}(S_1,S_2)|
	= |\widehat{P}(S_1) \widehat{Q}(S_2)
	- \widehat{P'}(S_1)\widehat{Q'}(S_2)| \\
	&\qquad\le |\widehat{P}(S_1)| |\widehat{Q}(S_2) - \widehat{Q'}(S_2)|  +|\widehat{Q'}(S_2)| |\widehat{P}(S_1) - \widehat{P'}(S_1)|.
	\end{align*}
	From \eqref{eq:def-Fourier-dist}, each Fourier coefficient of a probability distribution is bounded by $1$ in absolute value, and the proof follows.
\end{proof}

As $|\hat{P_1}(S)-\hat{P_{-1}}(S)|\le \theta/2$ for all $S \subseteq [d]$, we obtain from Lemma~\ref{lem:fourier-tensorize} that:
\begin{align}\label{eq:33}
\forall S\subseteq [2d] \colon~
|\hat{D_{\mathbf{1},0\mid y=1}}(S) - \hat{D_{\mathbf{1},1\mid y=1}}(S)|
=|\hat{P_1 \times P_{-1}}(S)- \hat{P_{-1}\times P_1}(S)|
\le \theta,\notag
\end{align}
where $\mathbf{1}$ is the all-ones vector.
We use this inequality to prove the following lemma:
%\yuval{remove this lemma and only use $y=1$}
%\begin{lemma} \label{lem:tensor-close-D}
%	For any $S \subseteq [2d]$ and $j \in \{-1,1\}$,
%	\[
%	\left|
%	\hat{D_{\mathbf{1},0 \mid y=j}}(S) - \hat{D_{\mathbf{1},1 \mid y=j}}(S)
%	\right|
%	\le \theta,
%	\]
%	where $\mathbf{1}$ is the all-ones vector.
%\end{lemma}
%\begin{proof}
%	Start by assuming $j=1$. Then, $D_{\mathbf{1},0|y=1} = P_1 \times P_{-1}$ and $D_{\mathbf{1},1|y=1} = P_{-1} \times P_1$ and the proof follows from \eqref{eq:33}. For $j = -1$, note that $D_{\mathbf{1},b|y=-1} = -D_{\mathbf{1},b|y=1}$, hence $\hat{D_{\mathbf{1},b|y=-1}}(S) = -\hat{D_{\mathbf{1},b|y=1}}(S)$, and the proof follows.
%\end{proof}
%We would like to claim that the two distributions $P_1 \times P_{-1}$ and $P_{-1}\times P_1$ have all Fourier coefficients at distance no larger than $2 \theta$, where $\theta$ is the maximal distance between the coefficients of $P_1$ and $P_{-1}$, as defined in Lemma~\ref{lem:one-statQ-hardness}. Indeed, this follows from the following lemma.

%Next, we would like to claim that for any fixed $h$, it holds that for most values of $a$, $|D_{a,0|y=1}(h) - D_{a,1|y=1}(h)|$ is small (and similarly when $y=1$). We use the fact that $D_{a,0|y=j}$ and $D_{a,1|y=j}$ are obtained from $P_1 \times P_{-1}$ and $P_{-1}\times P_1$ (respectively) by flipping some of the bits. In the following lemma, we replace $P_1 \times P_{-1}$ and $P_{-1}\times P_1$ with general measures $P$ and $Q$, and prove the following:

\begin{lemma} \label{lem:chebishev-usage}
	Fix $h_1 \colon \{-1,1\}^{2d} \to [-1,1]$ and $t > 0$. Then,
	\[
	\Pr_{a \sim \unif(\{-1,1\}^{2d})}[|h_1(D_{a,0\mid y=1}) - h_1(D_{a,1\mid y=1})| \ge t]
	\le \theta^2/t^2.
	\]
\end{lemma}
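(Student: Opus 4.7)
The plan is to carry out a second-moment argument in the Boolean Fourier basis. First I would observe that $D_{a,b\mid y=1}$ is exactly the push-forward of $D_{b\mid y=1}$ under the coordinate-wise map $x\mapsto a\cdot x$, since $f_{a,b}(a\cdot x)=\sgn(\sum_i a_i(a_i x_i))=\sgn(\sum_i x_i)=f_b(x)$, so the event $\{y=1\}$ is preserved under translation by $a$. Expanding $h_1$ in the Fourier basis as $h_1 = \sum_{S\subseteq[2d]}\hat{h_1}(S)\chi_S$ and using $\chi_S(a\cdot x)=\chi_S(a)\chi_S(x)$ then gives the clean identity
\[
h_1(D_{a,b\mid y=1}) \;=\; \E_{x\sim D_{b\mid y=1}}[h_1(a\cdot x)] \;=\; \sum_{S\subseteq[2d]} \hat{h_1}(S)\,\hat{D_{b\mid y=1}}(S)\,\chi_S(a).
\]

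Subtracting the two values of $b$, squaring, and averaging over $a\sim\unif(\{-1,1\}^{2d})$, the orthogonality relation $\E_a[\chi_S(a)\chi_T(a)]=\mathds{1}_{S=T}$ diagonalizes the double sum into
\[
\E_a\!\left[\bigl(h_1(D_{a,0\mid y=1}) - h_1(D_{a,1\mid y=1})\bigr)^2\right] \;=\; \sum_{S} \hat{h_1}(S)^2\,\bigl(\hat{D_{0\mid y=1}}(S) - \hat{D_{1\mid y=1}}(S)\bigr)^2.
\]
At this point I would invoke \eqref{eq:33} to bound each coefficient difference by $\theta$ (using that $D_{b\mid y=1}=D_{\mathbf{1},b\mid y=1}$), and invoke Parseval together with $|h_1|\le 1$ to bound $\sum_S \hat{h_1}(S)^2 = \E[h_1^2]\le 1$. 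Combining these gives a uniform ceiling of $\theta^2$ on the second moment, and Markov's inequality applied to the nonnegative random variable $(h_1(D_{a,0\mid y=1})-h_1(D_{a,1\mid y=1}))^2$ immediately yields the claimed $\theta^2/t^2$ tail bound.

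The conceptual heart of the argument is that the random sign flip $a$ turns $\chi_S(a)$ into independent $\pm 1$ random variables indexed by frequency $S$, decorrelating different Fourier levels after averaging so that Parseval does all the work. I do not expect a substantive obstacle; the only bookkeeping to verify is the identity $f_{a,b}(a\cdot x)=f_b(x)$ which justifies commuting conditioning with translation, and this is immediate from $a_i^2=1$.
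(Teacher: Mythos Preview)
Your proposal is correct and follows essentially the same approach as the paper's proof: both express $h_1(D_{a,b\mid y=1})$ via the Fourier expansion, use the orthogonality of $\chi_S(a)$ over uniform $a$ to diagonalize the squared difference, bound the coefficient differences by $\theta$ via \eqref{eq:33}, apply Parseval with $|h_1|\le 1$, and finish with Markov on the square (the paper calls this Chebyshev). Your explicit verification that $f_{a,b}(a\cdot x)=f_b(x)$ to justify the push-forward of the conditional distribution is a slightly cleaner presentation of what the paper states directly as $\hat{P_a}(S)=\hat{P}(S)\chi_S(a)$.
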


\begin{proof}
	Denote $P = D_{\mathbf{1},0\mid y=1}$ and $Q = D_{\mathbf{1},1\mid y=1}$. Denote $P_a = D_{a,0\mid y=1}$ and $Q_a = D_{a,1\mid y=1}$, and notice that $x\sim P_a$ is obtained by drawing $x'\sim P$ and setting $x_i = a_i x'_i$, and similarly $Q_a$ is obtained from $Q$. Hence $\hat{P_a}(S) = \hat{P}(S) \chi_S(a)$, where $\chi_S(a) = \prod_{i\in S} a_i$. Similarly, $\hat{Q_a}(S) = \hat{Q}(S) \chi_S(a)$.
	
	From \eqref{eq:apply-plancherel},
	\begin{align*}
	h_1(P_a) - h_1(Q_a)
	= \sum_S (\hat{P}_a(S)-\hat{Q}_a(S)) \hat{h_1}(S)
	= \sum_S (\hat{P}(S)-\hat{Q}(S)) \hat{h_1}(S) \chi_S(a).
	\end{align*}
	Squaring both sides, taking expectation over $a$, we obtain that
	\begin{align}
	&\E_a \left( h_1(P_a) - h_1(Q_a) \right)^2
	= \E_a \left(
	\sum_S \hat{h_1}(S) (\hat{P}_{a}(S) - \hat{Q}_{a}(S))
	\right)^2 \notag \\
	&\qquad=\sum_{S,T} \hat{h_1}(S) \hat{h_1}(T) (\hat{P}(S) - \hat{Q}(S))(\hat{P}(T) - \hat{Q}(T)) \E_{a} \chi_S(a) \chi_T(a) \notag\\
	&\qquad= \sum_{S} \hat{h_1}(S)^2 (\hat{P}(S) - \hat{Q}(S))^2 \label{eq:120}\\
	&\qquad\le \theta^2 \sum_S \hat{h_1}(S)^2 \label{eq:121}\\
	&\qquad= \theta^2 \E_{x \in \{-1,1\}^{2d}}[h(x)^2] \label{eq:122} \\
	&\qquad\le \theta^2. \label{eq:123}
	\end{align}
	where \eqref{eq:120} follows from $\E_a \chi_S(a) \chi_T(a) = \mathds{1}_{S = T}$, \eqref{eq:121} follows from the definitions of $P$ and $Q$,
	\eqref{eq:122} is Parseval's equality (\eqref{eq:plancherel}) and \eqref{eq:123} is due to the fact that by definition, $h_1(x) \in [-1,1]$ for all $x$.
	Therefore, by Chebyshev's inequality,
	\begin{align*}
	&\Pr_a[h_1(D_{a,0\mid y=1}) - h_1(D_{a,1\mid y=1})| \ge t]\\
	&=\Pr_a[|h_1(P_a) - h_1(Q_a)| > t] \le \mathrm{Var}(h_1(P_a) - h_1(Q_a)) / t^2 \le \theta^2/t^2.
	\end{align*}
\end{proof}

We are ready to conclude the proof of Lemma~\ref{lem:one-statQ-hardness}:
\begin{proof}[Proof of Lemma~\ref{lem:one-statQ-hardness}]
	From \eqref{eq:21},
	\begin{align}
	&\Pr_a[|h(D_{a,0},f_{a,0}) - h(D_{a,1},f_{a,1})| \ge t] \notag \\
	&\le \Pr_a[|h_1(D_{a,0\mid y=1}) - h_1(D_{a,1|y=1})| \ge t] \notag\\
	&+\Pr_a[|h_{-1}(D_{a,0\mid y=-1}) - h_{-1}(D_{a,1|y=-1})| \ge t] \notag\\
	&= \Pr_a[|h_1(D_{a,0\mid y=1}) - h_1(D_{a,1|y=1})| \ge t] \notag\\
	&+ \Pr_a[|h_{-1}(D_{-a,0\mid y=1}) - h_{-1}(D_{-a,1|y=1})| \ge t] \label{eq:130}\\
	&\le 2\theta^2/t^2, \label{eq:131}
	\end{align}
	where \eqref{eq:130} follows from $h_{-1}(D_{a,b|y=1}) = h_{-1}(D_{-a,b}|y=1)$ and \eqref{eq:131}
	follows from Lemma~\ref{lem:chebishev-usage}.
\end{proof}
Lastly, we conclude the proof of Lemma~\ref{lem:many-statQ-hardness}:
Lemma~\ref{lem:indisting-dist} implies that $\theta\le \exp(-c\gamma^{-2r/5})$, where $c>0$ is a constant depending only on $r$. Applying Lemma~\ref{lem:one-statQ-hardness} and taking union bound over $k = \exp(c_1\gamma^{-2r/5})$ statistical queries, the proof follows. 
\section{Lower bounds on convex optimization} \label{sec:opt}
In this section, we describe the implications of our main lower bound learning of linear models with a convex loss. 
Consider the task of optimizing a convex function $\ell(w;P) := \E_{(x,y) \sim P}\ell(w;(x,y))$, where $\ell$ is a convex and Lipschitz \emph{linear model}, namely,
$\ell(w;(x,y)) = \varphi(\langle w,x\rangle y)$ for a function $\varphi \colon [-1,1]\to \R$ which is convex and Lipschitz and $w$ is optimized over $B_d$, the unit ball in $\R^d$. Additionally, we will let $X = \mathbb{S}^{d-1}$ be the unit sphere in $\R^d$.% for the purpose of classification with large margin the norm of $x$ does not matter but here this will simplify the presentation.
We present two reductions: first, the standard reduction to \emph{hinge loss} which is used in the {\em soft-margin support vector machine} (SVM) algorithm, and secondly, a reduction to a different function which is smooth and strongly convex, to show that a lower bound holds even given these assumptions.

In the first reduction, we set $\ell(w;(x,y)) = \max(0, \gamma - \langle w,x\rangle y)$.
This loss has the nice property that if $w \in \mathbb{S}^{d-1}$ classifies $(x,y)$ correctly with margin $\gamma$, namely, if $\langle w,x\rangle y \ge \gamma$, then $\ell(w;(x,y)) = 0$. At the same time, if $w$ misclassifies $(x,y)$, namely, $\langle w,x\rangle y \le 0$, then $\ell(w;(x,y)) \ge \gamma$ (essentially, $\ell$ can be viewed as a scaled \emph{surrogate loss function}).
Thus, if the distribution $P$ is linearly separable with margin $\gamma$, then $\min_{w \in B_d} \ell(w;P) = 0$, and any $w$ satisfying $\ell(w;P) = \gamma/3$ is an approximate linear separator: $\Pr_{(x,y)\sim P}[\sgn(\langle w,x\rangle) \ne y] \le 1/3$. Hence, we can reduce solving linear models to classification, obtaining the following result:

\begin{theorem} \label{thm:lb-formal}
	For any $\alpha > 0$ there exists a loss function $\ell(w,(x,y)) = \varphi(y \langle w,x\rangle)$ where $\varphi$ is convex and $1$-Lipschitz, such that any non-interactive $\epsilon$-LDP algorithm $\A$ that outputs $\hat{w}$ satisfying $\E_\A[\ell(\hat{w};P)] \le \inf_{w \in B_d} \ell(w;P) + \alpha$, requires
	\[
	n \ge \min\left(\exp\left(\alpha^{-\Omega(1)}\right), \exp\left(d^{\Omega(1)}\right)\right)/e^{2\epsilon}.
	\]
\end{theorem}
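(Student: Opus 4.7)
The plan is to use the hinge-loss reduction sketched immediately before the statement, with the parameter $\gamma$ in $\varphi(t)=\max(0,\gamma-t)$ chosen as a function of both $\alpha$ and $d$. The function $\varphi$ is a pointwise maximum of two affine pieces with slopes $0$ and $-1$, hence convex and $1$-Lipschitz. Fix some $r\in(0,1)$ (taking $r=1/2$ suffices). For any $\gamma\in(0,1/4]$ and $d\ge\gamma^{-2-2r/5}$, Theorem~\ref{thm:informal-privacy}---applied to its $\{-1,1\}^d$ instances scaled into $\mathbb{S}^{d-1}$ by the norm-preserving map $x\mapsto x/\sqrt{d}$, which leaves margins intact---tells us that any non-interactive $\epsilon$-LDP learner of $\gamma$-margin separators with expected classification error at most $1/2-\gamma^{1-r}$ needs $n\ge\exp(C\gamma^{-2r/5})/e^{2\epsilon}$ samples.

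Next I would spell out the standard surrogate-to-classification reduction. Given a distribution $D$ on $\{-1,1\}^d$ and a separator $f^\ast$ with $\gamma(f^\ast,D)\ge\gamma$, witnessed by a unit vector $w^\ast$, let $P$ be the distribution on $\mathbb{S}^{d-1}\times\{-1,1\}$ obtained by drawing $x\sim D$, setting $y=f^\ast(x)$, and rescaling $x\leftarrow x/\sqrt{d}$. Then $y\langle w^\ast,x\rangle\ge\gamma$ almost surely, so $w^\ast\in B_d$ achieves $\ell(w^\ast;P)=0$ and hence $\inf_{w\in B_d}\ell(w;P)=0$. On the other hand, any misclassified sample $(x,y)$ satisfies $y\langle\hat w,x\rangle\le 0$ and therefore contributes at least $\gamma$ to the loss of $\hat w$, giving
\[
\alpha\ \ge\ \E_\A\bigl[\ell(\hat w;P)\bigr]\ \ge\ \gamma\cdot\Pr_{\A,(x,y)\sim P}\bigl[\sgn\langle\hat w,x\rangle\ne y\bigr],
\]
so any $\alpha$-suboptimal $\hat w$ yields a classifier of expected error at most $\alpha/\gamma$.

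Finally, I would set $\gamma:=\min\bigl(1/4,\ \max(3\alpha,\ d^{-1/(2+2r/5)})\bigr)$. This single choice simultaneously ensures the dimension condition $d\ge\gamma^{-2-2r/5}$, the margin condition $\gamma\le 1/4$, and $\alpha/\gamma\le 1/3\le 1/2-\gamma^{1-r}$ once $\gamma$ is below a constant depending only on $r$ (the complementary regime of constant-size $\alpha$ and small $d$ is covered trivially, since the claimed bound is then $\exp(O(1))$). Invoking Theorem~\ref{thm:informal-privacy} produces $n\ge\exp(C\gamma^{-2r/5})/e^{2\epsilon}$; the two branches of the $\max$ give $\exp(\Omega(\alpha^{-2r/5}))$ when $d\ge(3\alpha)^{-2-2r/5}$ and $\exp(\Omega(d^{2r/(10+2r)}))$ otherwise, which combines into the advertised $\min$. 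The only obstacle is really this parameter bookkeeping: picking a single $\gamma$ that balances all three constraints of Theorem~\ref{thm:informal-privacy} while simultaneously yielding the stated $\alpha^{\Omega(1)}$ and $d^{\Omega(1)}$ exponents.
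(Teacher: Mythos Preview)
Your proposal is correct and follows essentially the same approach as the paper: the paper also uses the hinge loss $\varphi(t)=\max(0,\gamma-t)$, sets $\gamma=\max(3\alpha,\,d^{-1/(2+1/5)})$ (i.e.\ the $r=1/2$ case of your formula), and invokes Theorem~\ref{thm:informal-privacy}. Your write-up is in fact more careful than the paper's terse proof, since you explicitly handle the rescaling to $\mathbb{S}^{d-1}$, the verification that the resulting error bound $1/3$ beats the $1/2-\gamma^{1-r}$ threshold, and the trivial regime where $\gamma$ is bounded below by a constant.
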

\begin{proof}
	Fix such algorithm $\A$. As discussed above, by simulating $\A$ we can approximately solve any classification problem with margin of at least $3 \alpha$. Denote $\gamma = \max(3\alpha, d^{-1/(2+1/5)})$.
	Applying the lower bound on classification (Theorem~\ref{thm:informal-privacy}), we derive that $n \ge \exp(\gamma^{-\Omega(1)})$, and the result follows.
	%We add a short comment: while Theorem~\ref{thm:lb-formal} assumes that $X = \mathbb{R}^d$ and here we assume that $X = \mathbb{S}^{d-1}$, both assumptions are equivalent, as normalizing all the vectors would not affect the margin (the vectors $x$ and $\lambda x$ are identical for the purpose of classification with margin for all $\lambda > 0$, see \eqref{eq:def-margin}).
\end{proof}

Before proceeding, we define smoothness and strong convexity:
\begin{definition}
	A differentiable function $f \colon [-1,1] \to \mathbb{R}$ with derivative $f'(x)$ is \emph{$\sigma$-smooth} and $\mu$-strongly convex for $0\le \mu\le\sigma$, if for any $x, x' \in [-1,1]$,
	\[
	f'(x)(y-x) + \frac{\mu}{2}(y-x)^2
	\le f(y) - f(x)
	\le f'(x) (y-x) +\frac{\sigma}{2} (y-x)^2.
	\]
\end{definition}
Next, we define the following convex loss function $\varphi_\gamma\colon [-1,1] \to\mathbb{R}$:
\[
\varphi_\gamma(t) = \frac{(1-t)^2}{8} + \begin{cases}
	1-2t/\gamma & -1 \le t \le 0\\
	(t-\gamma)^2/\gamma^2 & 0 \le t \le \gamma\\
	0 & \gamma \le t \le 1.
\end{cases}
\]
Note that $\varphi_{\gamma}(t) - (1-t)^2/8$ is non-negative, monotonic non-decreasing, $2/\gamma$-Lipschitz, $2/\gamma^2$-smooth and convex. This implies that $\varphi_{\gamma}(t)$ is non-negative, monotonic non-decreasing, $2/\gamma+1/2 \le 3/\gamma$-Lipschitz, $2/\gamma^2+1/4 \le 3/\gamma^2$-smooth and $1/4$-strongly convex.
Additionally, the following holds:
\begin{claim}\label{cla:simple-opt}
For any $t \ge \gamma$, it holds that $\varphi_\gamma(t) \le \varphi_\gamma(\gamma) \le 1/8$, while for any $t \le 0$, $\varphi_\gamma(t) \ge \varphi_\gamma(0) \ge 9/8$.
\end{claim}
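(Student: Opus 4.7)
The claim is entirely a routine check against the piecewise definition of $\varphi_\gamma$, so the plan is to simply verify each of the two assertions on the relevant piece and use monotonicity.

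For the first assertion, I would note that on $[\gamma,1]$ the definition reduces to $\varphi_\gamma(t) = (1-t)^2/8$, because the bracketed term vanishes. At $t=\gamma$ the middle piece $(t-\gamma)^2/\gamma^2$ also vanishes, so the definition is consistent and $\varphi_\gamma(\gamma) = (1-\gamma)^2/8$. Since $(1-t)^2/8$ is monotonically non-increasing on $[\gamma,1] \subseteq [0,1]$, we immediately obtain $\varphi_\gamma(t) \le \varphi_\gamma(\gamma)$ for all $t \in [\gamma,1]$. Finally, $(1-\gamma)^2/8 \le 1/8$ because $\gamma \in (0,1]$.

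For the second assertion, on $[-1,0]$ the definition gives $\varphi_\gamma(t) = (1-t)^2/8 + 1 - 2t/\gamma$. The plan is to observe that each summand is non-increasing in $t$ on $[-1,0]$: the term $(1-t)^2/8$ is decreasing for $t \le 1$, and $1 - 2t/\gamma$ is decreasing in $t$ for any $\gamma > 0$. Hence the minimum over $[-1,0]$ is attained at $t=0$, yielding $\varphi_\gamma(t) \ge \varphi_\gamma(0) = 1/8 + 1 = 9/8$.

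There is no real obstacle here; the only thing to be slightly careful about is verifying that the two pieces agree at the boundary $t=\gamma$ (both give $(1-\gamma)^2/8$) and at $t=0$ (both give $1/8 + 1 = 9/8$), so that evaluating $\varphi_\gamma$ at the endpoints is unambiguous. Everything else is a one-line monotonicity argument on the explicit formula.
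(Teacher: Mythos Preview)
Your proof is correct and is exactly the kind of direct verification the paper has in mind; the paper does not give an explicit proof of this claim, treating it as immediate from the piecewise definition and the monotonicity noted just before the claim. Your checks of the boundary values at $t=0$ and $t=\gamma$ and the one-line monotonicity arguments on each piece are precisely what is needed.
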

Given a classification problem with distribution $P$ over $X \times Y$ and margin $\gamma$, we reduce it to the convex optimization problem with the loss function $\ell_\gamma(w;(x,y)) = \varphi_\gamma(\langle w,x\rangle y)$.
Claim~\ref{cla:simple-opt} and Markov's inequality imply the following connection between the classification error and the convex loss:
\begin{claim} \label{cla-simple-opt}
	For any vector $w \in B_d$,
	\[\
	\err_{P}(w)
	\le\frac{\ell_\gamma(w;P)}{9/8}
	\le \ell_\gamma(w;P),
	\]
	where $\err_{P}(w)$ is the classification error of the function $x \mapsto \sgn(\langle w,x\rangle)$.
	Additionally,
	$\inf_{w \in B_d} \ell_\gamma(w;P) \le 1/8$ (in particular, any unit-norm vector that classifies correctly with $\gamma$-margin would achieve this loss).
\end{claim}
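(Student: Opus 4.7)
The plan is to chain together the two key properties of $\varphi_\gamma$ recorded in Claim~\ref{cla:simple-opt}: the upper bound $\varphi_\gamma(t)\le 1/8$ for $t\ge\gamma$ gives the bound on the infimum, and the lower bound $\varphi_\gamma(t)\ge 9/8$ for $t\le 0$ converts low expected loss into low misclassification probability through Markov's inequality. The nonnegativity of $\varphi_\gamma$ (also recorded in Claim~\ref{cla:simple-opt}) is what makes Markov applicable and also what gives the trivial second inequality in the chain.

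For the first part, I would write $\err_P(w)=\Pr_{(x,y)\sim P}[\sgn(\langle w,x\rangle)\ne y]\le \Pr_{(x,y)\sim P}[y\langle w,x\rangle\le 0]$. On this event, Claim~\ref{cla:simple-opt} gives $\varphi_\gamma(y\langle w,x\rangle)\ge 9/8$, so
\[
\err_P(w)\;\le\;\Pr_{(x,y)\sim P}\!\left[\varphi_\gamma(y\langle w,x\rangle)\ge 9/8\right]\;\le\;\frac{\E_{(x,y)\sim P}[\varphi_\gamma(y\langle w,x\rangle)]}{9/8}\;=\;\frac{\ell_\gamma(w;P)}{9/8},
\]
by nonnegativity of $\varphi_\gamma$ and Markov's inequality. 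The rightmost inequality $\ell_\gamma(w;P)/(9/8)\le\ell_\gamma(w;P)$ is immediate since $\ell_\gamma(w;P)\ge 0$ and $9/8\ge 1$.

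For the ``additionally'' part, I would use the assumption (implicit in the reduction set up right before the claim) that $P$ is linearly separable with margin $\gamma$, so there exists a unit vector $w^\star\in B_d$ with $y\langle w^\star,x\rangle\ge\gamma$ for every $(x,y)\in\supp(P)$. Then Claim~\ref{cla:simple-opt} yields $\varphi_\gamma(y\langle w^\star,x\rangle)\le \varphi_\gamma(\gamma)\le 1/8$ pointwise on $\supp(P)$, so taking expectation gives $\ell_\gamma(w^\star;P)\le 1/8$ and hence $\inf_{w\in B_d}\ell_\gamma(w;P)\le 1/8$.

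There is no real obstacle here: the argument is essentially a one-line application of Markov after the bounds $\varphi_\gamma(0)\ge 9/8$ and $\varphi_\gamma(\gamma)\le 1/8$ have been established. The only thing to be mildly careful about is to state up front that we are in the margin-$\gamma$ separable regime so that the existence of the witness $w^\star$ is justified, and to invoke Markov in the direction that uses $\varphi_\gamma\ge 0$ rather than any two-sided control on $\varphi_\gamma$.
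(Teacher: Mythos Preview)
Your proof is correct and matches the paper's own justification, which simply states that the claim follows from Claim~\ref{cla:simple-opt} and Markov's inequality. You have filled in exactly those details.
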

Based on the above claim, we derive the following relationship between the expected error and expected loss:
\begin{claim} \label{cla:opt}
	Let $\A$ be a (randomized) algorithm that for some distribution $P$ over $\mathbb{S}^{d-1}\times \{-1,1\}$ outputs $\hat{w} \in B_d$ that satisfies $$\E_{\A}[\ell_\gamma(\hat{w};P)] \le \inf_{w \in B_d} \ell_\gamma(w;P) + 1/8. $$ Then, $\E_{\A}[\err_{P}(\hat{w})] \le 1/4$.
\end{claim}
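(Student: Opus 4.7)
The plan is to derive the bound directly from Claim~\ref{cla-simple-opt}, which already contains both ingredients needed: a pointwise domination of the classification error by the convex loss, and a bound on the optimum of the convex loss.

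First I would apply the pointwise inequality from Claim~\ref{cla-simple-opt} to every realization of $\A$'s output. Since the claim asserts $\err_P(w) \le \ell_\gamma(w; P)/(9/8)$ for every $w \in B_d$, I can take expectations over the algorithm's internal randomness (using linearity) to obtain
\[
\E_{\A}[\err_P(\hat w)] \le \frac{8}{9}\,\E_{\A}[\ell_\gamma(\hat w; P)].
\]

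Next, I would plug in the optimization guarantee of $\A$ together with the second part of Claim~\ref{cla-simple-opt}: the assumption gives $\E_{\A}[\ell_\gamma(\hat w; P)] \le \inf_{w \in B_d}\ell_\gamma(w;P) + 1/8$, and the claim states $\inf_{w \in B_d} \ell_\gamma(w; P) \le 1/8$. Combining the two yields $\E_{\A}[\ell_\gamma(\hat w; P)] \le 1/4$, and substituting into the previous display gives $\E_{\A}[\err_P(\hat w)] \le (8/9)\cdot(1/4) = 2/9 \le 1/4$, as desired.

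There is no real obstacle: the nontrivial work was already done in setting up $\varphi_\gamma$ so that it is at most $1/8$ on the $\gamma$-margin region and at least $9/8$ on the misclassification region, which is exactly what Claim~\ref{cla-simple-opt} packages. The present claim simply takes expectations and arithmetically combines these two facts with the suboptimality assumption. One small thing I would check to be careful is that the inequality in Claim~\ref{cla-simple-opt} holds for every $w \in B_d$ (not just unit-norm $w$), so that it applies to $\hat w$ under the algorithm's randomness without any additional assumption on $\|\hat w\|$; this is clear from how $\ell_\gamma$ and $\err_P$ are both defined directly in terms of $\langle w, x\rangle$.
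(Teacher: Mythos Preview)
Your proposal is correct and essentially identical to the paper's proof, which also just chains the two statements of Claim~\ref{cla-simple-opt} with the suboptimality assumption. The only cosmetic difference is that the paper uses the looser bound $\err_P(w)\le \ell_\gamma(w;P)$ rather than the $8/9$ factor, landing exactly at $1/4$ instead of your $2/9$.
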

\begin{proof}
	Applying both statements in Claim~\ref{cla-simple-opt}, we obtain that
	\[
		\E_{\A}[\err_{P}(\hat{w})]
		\le \E_{\A}[ \ell_\gamma(\hat{w};P)]
		\le \inf_{w \in B_d} \ell_\gamma(w;P) + 1/8
		\le 1/4.
	\]
\end{proof}
We are ready to state our lower bound for learning of a linear model with smooth and strongly convex loss:
\begin{theorem}
	For any parameters $0 \le \mu < \sigma \le \infty$, $L > 0$ and $\alpha > 0$, there exists a loss function $\ell(w,(x,y)) = \varphi(y \langle w,x\rangle)$ where $\varphi$ is convex, $L$-Lipschitz, $\sigma$-smooth and $\mu$-strongly convex, such that any non-interactive $\epsilon$-LDP algorithm $\A$ that outputs $\hat{w}$ satisfying $\E_\A[\ell(\hat{w};P)] \le \inf_{w \in B_d} \ell(w;P) + \alpha$, requires
	\begin{align*}
	n \ge \exp\left(
	c \min\left(
	\left(\frac{L}{\max(\mu,\alpha)}\right)^{2/5\cdot (1-\xi)}, \right.\right. 
	\left.\left.\left(\frac{\sigma}{\max(\mu,\alpha)}\right)^{1/5\cdot (1-\xi)},
	d^{1/6\cdot (1-\xi)}
	\right)
	\right),
	\end{align*}
	where $\xi$ can be any number in $(0,1)$ and $c > 0$ depends only on $\xi$.
\end{theorem}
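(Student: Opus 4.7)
The plan is to reduce to the classification lower bound Theorem~\ref{thm:informal-privacy} via a properly scaled version of the loss $\ell_\gamma$ already introduced, namely $\ell(w;(x,y)) := \beta\,\varphi_\gamma(y\langle w,x\rangle)$ for carefully chosen scaling factor $\beta>0$ and margin parameter $\gamma\in(0,1/4]$. The main point is that scaling $\varphi_\gamma$ by $\beta$ multiplies the Lipschitz constant by $\beta$, the smoothness by $\beta$, and the strong convexity by $\beta$, so the three constraints $L,\sigma,\mu$ can be accommodated by trading $\beta$ against $\gamma$. The additive loss slack $\alpha$ must also be absorbed, which dictates a lower bound on $\beta$.

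First I would identify the constraints on $\beta$ and $\gamma$. By the properties of $\varphi_\gamma$ listed just before Claim~\ref{cla:simple-opt}, the function $\beta\varphi_\gamma$ is $3\beta/\gamma$-Lipschitz, $3\beta/\gamma^2$-smooth, and $\beta/4$-strongly convex. For the target parameters I need $\beta/4\ge\mu$, $3\beta/\gamma\le L$, and $3\beta/\gamma^2\le\sigma$. To translate the $\alpha$-additive loss guarantee to the $1/8$-additive tolerance required by Claim~\ref{cla:opt}, I need $\alpha\le\beta/8$. Setting $\beta:=8\max(\mu,\alpha)$ simultaneously satisfies the strong-convexity and tolerance requirements, and reduces the remaining two constraints to
\[
\gamma \;\ge\; \max\!\left(\frac{24\max(\mu,\alpha)}{L},\;\sqrt{\frac{24\max(\mu,\alpha)}{\sigma}}\right).
\]

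Second, fix $r:=1-\xi$ and let $\gamma$ be the maximum of the two quantities above together with $d^{-1/(2+2r/5)}$, so that the hypothesis $d\ge\gamma^{-2-2r/5}$ of Theorem~\ref{thm:informal-privacy} holds. The scaled loss is then convex, $L$-Lipschitz, $\sigma$-smooth and $\mu$-strongly convex by construction. Any non-interactive $\epsilon$-LDP algorithm that achieves $\E_\A[\ell(\hat w;P)]\le\inf_{w\in B_d}\ell(w;P)+\alpha$ yields, after dividing by $\beta$, a $\varphi_\gamma$-loss within $\alpha/\beta\le 1/8$ of its infimum, hence (Claim~\ref{cla:opt}) an expected classification error of at most $1/4$, which in particular is at most $1/2-\gamma^{1-r}$ once $\gamma$ is small. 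Theorem~\ref{thm:informal-privacy} then forces $n\ge \exp(c\,\gamma^{-2r/5})/e^{2\epsilon}$, where $c>0$ depends only on $r$, i.e.\ only on $\xi$.

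Third, I would plug in the three possible values of $\gamma$ separately. If the first term dominates, $\gamma^{-2r/5}=\Theta((L/\max(\mu,\alpha))^{2(1-\xi)/5})$; if the second dominates, $\gamma^{-2r/5}=\Theta((\sigma/\max(\mu,\alpha))^{(1-\xi)/5})$; if the dimension constraint dominates, $\gamma^{-2r/5}=d^{r/(5+r)}\ge d^{(1-\xi)/6}$. Since $\gamma$ is the maximum of the three options, $\gamma^{-2r/5}$ is the minimum of the three exponents, which is exactly the bound claimed. There is no real obstacle here: all the heavy lifting has been done in Theorem~\ref{thm:informal-privacy} and Claim~\ref{cla:opt}. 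The only subtlety is the simultaneous role of $\beta$ — it has to be large enough to enforce $\mu$-strong convexity and also large enough to convert the additive slack $\alpha$ into a constant-order slack on the rescaled classification surrogate — which is what forces $\max(\mu,\alpha)$ (rather than $\mu$ or $\alpha$ alone) to appear in the denominators of the final bound.
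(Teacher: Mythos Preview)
Your proposal is correct and follows essentially the same route as the paper: scale $\varphi_\gamma$ by $\beta=\Theta(\max(\mu,\alpha))$, choose $\gamma$ as the maximum of the three thresholds coming from the Lipschitz, smoothness, and dimension constraints, invoke Claim~\ref{cla:opt} to turn an $\alpha$-optimal solution into a classifier with expected error at most $1/4$, and then apply Theorem~\ref{thm:informal-privacy} with $r=1-\xi$. The only cosmetic differences are that the paper takes $\beta=\theta:=\max(\mu,\alpha)$ (allowing the resulting constants to be $3L$, $3\sigma$, $\mu/4$ ``up to universal constants'') and fixes the dimension threshold at $d^{-1/(2+2/5)}$ rather than $d^{-1/(2+2r/5)}$; both choices lead to the same $d^{(1-\xi)/6}$ term after simplification.
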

\begin{proof}
	In the proof we will allow redundancy in the parameters up to universal constants (e.g., requiring $\ell$ to be $\Omega(\mu)$ rather than $\mu$ strongly convex). Assume that $\max(\mu,\alpha) \le \min(L,\sigma)$, otherwise the bound trivially follows.
	Denote $\theta = \max(\mu,\alpha)$ and let
	\begin{align*}
	\gamma 
	:=\max\left(
	\frac{\max(\mu,\alpha)}{L},
	\left(\frac{\max(\mu,\alpha)}{\sigma}\right)^{1/2},
	d^{-1/(2+2/5)}
	\right) 
	= \max\left(
	\frac{\theta}{L},
	\left(\frac{\theta}{\sigma}\right)^{1/2},
	d^{-1/(2+2/5)}
	\right).
	\end{align*}
	Consider the function $\theta \varphi_\gamma$: its Lipschitz constant is bounded by $3\theta/\gamma\le 3L$, the smoothness parameter is bounded by $3\theta /\gamma^2 \le 3\theta / (\sigma/\theta) = 3\sigma$ and the strong convexity parameter equals $\theta/4 \ge \mu/4$. Let $\A$ be an algorithm which finds an $\alpha/8$-optimal solution to the linear model defined by the function $\theta \varphi_\gamma$ (in expectation). Then, $\A$ finds an $\alpha/(8\theta)\le 1/8$-optimal solution to optimization of linear models with loss defined by $\varphi_\gamma$. By Claim~\ref{cla:opt}, $\A$ finds approximate linear separator to any classification problem with margin $\gamma$. Since $\gamma$ is defined to satisfy $d\ge \gamma^{-2-2/5}$, by the lower bound on classification (Theorem~\ref{thm:informal-privacy}), for any $r \in (0,1)$, the sample complexity satisfies
	\[
	n \ge \exp(c(\gamma^{-2r/5})),
	\]
	where the constant $c>0$ may depend only on $r$. Taking $r = 1-\xi$, completes the proof.
\end{proof} 
\section{Implications for distributed learning with communication constraints}
\label{sec:low-comm}
In this section we briefly define the model of bounded communication per sample, state the known equivalence results to the SQ model and spell out the immediate corollary of our lower bound. In the bounded communication model \citep{Ben-DavidD98,SteinhardtVW16} it is assumed that the total number of bits learned by the server about each data sample is bounded by $\ell$ for some $\ell \ll \log |Z|$. As in the case of LDP this is modeled by using an appropriate oracle for accessing the dataset. For simplicity we only introduce the non-interactive version of this model.
\begin{definition}
\label{def:comm}
We say that a (possibly randomized) algorithm $R\colon Z \to \{0,1\}^\ell$ extracts $\ell$ bits. For a dataset $S \in Z^n$, an $\COMM_S$ oracle takes as an input an index $i$ and an algorithm $R$ and outputs a random value $w$ obtained by applying $R(z_i)$. A non-interactive algorithm is $\ell$-bit communication bounded if it accesses $S$ only via the $\ell$-bit $\COMM_S$ oracle each sample is accessed once and all of its queries are determined before observing any of the oracle's responses.
\end{definition}
%We use (non-)interactive in the same sense as we do for local differential privacy.

As first shown by \citet{Ben-DavidD98}, it is easy to simulate a single query to $\COMM$ applied to a random sample from distribution $P$ using a single query to $\STAT_P(\tau)$. The simulation has been strengthened in \citep{FeldmanGRVX:12} and generalized to the $\COMM$ oracle that can access each sample more than once in \citep{SteinhardtVW16}.
\begin{theorem}[\citep{SteinhardtVW16}]
\label{thm:COMM-2-SQ}
Let $\A$ be a non-interactive $\ell$-bit communication bounded algorithm that makes queries to $\COMM_S$ for $S\in Z^n$ drawn i.i.d.~from some distribution $P$. Then for every $\delta >0$, there is an SQ algorithm $\A_{SQ}$ that makes $2n\ell$ non-adaptive queries to $\STAT_P\left(\delta/(2^{\ell+1} n)\right)$ and produces the same output as $\A$ with probability at least $1-\delta$. 
\end{theorem}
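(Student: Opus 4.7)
Since $\A$ is non-interactive, the $n$ randomizers $R_1,\ldots,R_n$ that it applies are fixed by its description and its own internal randomness, independently of oracle responses. So one can simulate $\A$ by drawing each $R_i(z_i)$ from the true distribution of $R_i(z)$ for $z \sim P$, and these $n$ simulations can be carried out non-adaptively. My plan is to generate each $\ell$-bit output bit by bit, using two statistical queries per bit.

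Concretely, for randomizer $R_i$, suppose the simulation has already produced the prefix $W_{i,<j} = (W_{i,1},\ldots,W_{i,j-1})$. Ask two queries to $\STAT_P(\tau)$ with functions
\[
h_j^{(b)}(z) \;=\; \Pr_{R_i}\!\left[R_i(z)_1 = W_{i,1},\ldots,R_i(z)_{j-1} = W_{i,j-1},\, R_i(z)_j = b\right], \quad b \in \{0,1\},
\]
where the inner probability is over the internal coins of $R_i$. Call the returned values $\hat\alpha_j^{(b)}$; they approximate the joint probabilities $\alpha_j^{(b)} := \E_{z\sim P}[h_j^{(b)}(z)]$ to within $\tau$. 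Sample $W_{i,j}$ as Bernoulli$\bigl(\hat\alpha_j^{(1)}/(\hat\alpha_j^{(0)}+\hat\alpha_j^{(1)})\bigr)$, defaulting to $0$ if the denominator is non-positive. The query functions depend only on $\A$'s fixed description and on the simulation's own random bits, never on prior oracle outputs, so the resulting SQ algorithm is non-adaptive and uses $2\ell$ queries per randomizer, hence $2n\ell$ queries in total.

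For correctness, fix $i$ and let $Q_i$ be the true law of $R_i(z)$ and $\hat Q_i$ its simulated counterpart. The standard coupling/chain-rule bound for product distributions gives
\[
\dtv(Q_i, \hat Q_i) \;\le\; \sum_{j=1}^\ell \E_{W_{<j}\sim Q_i}\, \dtv\!\left(Q_i(\cdot\mid W_{<j}),\,\hat Q_i(\cdot\mid W_{<j})\right).
\]
Writing $p_{j-1}(w_{<j}) := \alpha_j^{(0)}(w_{<j})+\alpha_j^{(1)}(w_{<j})$, the true conditional $Q_i(b\mid w_{<j}) = \alpha_j^{(b)}/p_{j-1}$ and the estimate $\hat\alpha_j^{(b)}/(\hat\alpha_j^{(0)}+\hat\alpha_j^{(1)})$ differ by $O(\tau/p_{j-1})$ when $p_{j-1} \gg \tau$, and by at most $1$ always. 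The per-prefix contribution to the expectation is therefore $\min(p_{j-1},\, O(\tau))$, which sums over the at most $2^{j-1}$ length-$(j{-}1)$ prefixes to give $O(2^{j-1}\tau)$. Summing over $j$ yields $\dtv(Q_i,\hat Q_i) \le O(2^\ell \tau)$. Summing over the $n$ independent randomizers and plugging in $\tau = \delta/(2^{\ell+1} n)$ produces total variation at most $\delta$, which by the coupling interpretation of TV distance gives a coupling under which $\A_{SQ}$ and $\A$ produce identical outputs with probability at least $1-\delta$.

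The only nontrivial step I expect is the telescoping total-variation calculation, where one has to be careful about prefixes whose empirical mass $\hat\alpha_j^{(0)}+\hat\alpha_j^{(1)}$ is too small relative to $\tau$ for the ratio to be a meaningful estimate; as above, those contribute only their own $Q_i$-mass to the bound, which is already absorbed by the $O(2^\ell \tau)$ accounting. Once that bookkeeping is in place the constants line up so that exactly $\tau = \delta/(2^{\ell+1} n)$ suffices.
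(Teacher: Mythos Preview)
The paper does not prove this theorem; it is quoted as a result of Steinhardt, Valiant, and Wootters and used as a black box, so there is no ``paper's own proof'' to compare against. That said, your proposal has a genuine gap.

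You assert that the resulting SQ algorithm is non-adaptive because ``the query functions depend only on $\A$'s fixed description and on the simulation's own random bits, never on prior oracle outputs.'' This is not true of the algorithm you describe. Your query $h_j^{(b)}$ for bit $j$ is defined in terms of the already-sampled prefix $W_{i,<j}$, and you sample each $W_{i,j}$ as a Bernoulli with parameter $\hat\alpha_j^{(1)}/(\hat\alpha_j^{(0)}+\hat\alpha_j^{(1)})$, which is a function of the oracle responses $\hat\alpha_j^{(0)},\hat\alpha_j^{(1)}$. Hence the query for bit $j$ depends, through $W_{i,<j}$, on the oracle's answers to the queries for bits $1,\ldots,j-1$. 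In a non-adaptive SQ algorithm all queries must be fixed before any response is observed; your bit-by-bit construction is sequential and therefore adaptive.

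Your total-variation bookkeeping is fine and does yield $\dtv \le O(n\cdot 2^\ell \tau)$, matching the stated tolerance; the problem is solely the non-adaptivity claim. If you keep the prefix-conditioning idea but want non-adaptivity, you must submit $h_j^{(b)}(\,\cdot\,;w_{<j})$ for \emph{every} prefix $w_{<j}\in\{0,1\}^{j-1}$ up front, which costs $\sum_{j=1}^{\ell} 2\cdot 2^{j-1}=2(2^{\ell}-1)$ queries per randomizer rather than $2\ell$. That still suffices for the corollary the paper actually uses (the tolerance already carries a $2^{\ell}$ factor), but it does not deliver the $2n\ell$ non-adaptive queries in the theorem as stated. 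To hit exactly that bound you need a different construction; either consult the cited reference for their actual simulation, or revisit whether ``non-adaptive'' with $2n\ell$ queries is precisely what their argument yields.
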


A direct corollary of Theorems \ref{thm:sq} and \ref{thm:COMM-2-SQ} is the following lower bound:
\begin{corollary}
	Fix $\gamma \in (0,1/2)$, $r \in (0,1)$ and $d \ge \gamma^{-2-2r/5}$. Let $\A$ be a non-interactive $\ell$-bit communication bounded algorithm with $n$ users. Assume that for any classification problem $(D,f^*)$ over $\mathbb{R}^d$ with margin $\gamma(f^*,D) \ge \gamma$, the algorithm outputs a hypothesis $\hat{f}$ with expected loss $\E_{\A}[\err_{f^*,D}(\hat{f})] \le 1/2 -\gamma^{1-r}$. Then, either $\ell \ge c \gamma^{-2r/5}$ or
	$
	n \ge \exp(c \gamma^{-2r/5})
	$, where $c > 0$ is a constant depending only on $r$.
\end{corollary}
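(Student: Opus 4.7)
The corollary follows by combining the two theorems just stated in exactly the same way that Theorem~\ref{thm:informal-privacy} is obtained from Theorems~\ref{thm:sq} and \ref{thm:LDP-2-SQ}. Given a non-interactive $\ell$-bit communication bounded algorithm $\A$ with $n$ users achieving the required expected error, the first step is to apply Theorem~\ref{thm:COMM-2-SQ} with a sufficiently small constant $\delta = \delta(r) > 0$, obtaining a non-adaptive SQ algorithm $\A_{SQ}$ that makes $q := 2n\ell$ queries to $\STAT_P(\tau')$ at tolerance $\tau' := \delta/(2^{\ell+1} n)$ and whose output matches $\A$'s except with probability $\delta$. Since the classification error lies in $[0,1]$, $\A_{SQ}$ succeeds on the same classification task with expected error at most $1/2 - \gamma^{1-r} + O(\delta)$; choosing $\delta$ small enough in terms of $r$ keeps this bound of the form $1/2 - \gamma^{1-r'}$ for some $r' \in (0, r)$, and the small loss in the exponent is absorbed into the $r$-dependent constant appearing in the final bound.

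The next step is to apply Theorem~\ref{thm:sq} and split into two cases by comparing $\tau'$ to the SQ-lower-bound tolerance $\tau := \exp(-c\gamma^{-2r/5})$. An SQ algorithm that succeeds given access to $\tau'$-tolerant queries also succeeds when given any more accurate oracle, in particular any oracle of tolerance at most $\tau'$. Hence if $\tau \le \tau'$, then $\A_{SQ}$ is a legitimate SQ learner at tolerance $\tau$, and Theorem~\ref{thm:sq} forces
\[
2n\ell = q \ge \exp\left(c \gamma^{-2r/5}\right).
\]
Otherwise $\tau > \tau'$, which rearranges directly into
\[
2^{\ell+1} n > \delta \exp\left(c\gamma^{-2r/5}\right).
\]

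To extract the dichotomy, suppose for contradiction that $\ell < c' \gamma^{-2r/5}$ for a constant $c' < c/(2\log 2)$. In the first case, $n \ge \exp(c\gamma^{-2r/5})/(2\ell) = \exp(\Omega(\gamma^{-2r/5}))$, since $\ell$ grows only polynomially in $\gamma^{-2r/5}$. In the second case, $2^{\ell+1} \le 2\exp(c\gamma^{-2r/5}/2)$, so $n > (\delta/2) \exp(c\gamma^{-2r/5}/2)$, again exponentially large. Thus either $\ell \ge c'\gamma^{-2r/5}$ or $n \ge \exp(\Omega(\gamma^{-2r/5}))$, which is the claim after renaming constants.

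The only subtlety in the plan is the direction of the tolerance comparison: the SQ lower bound of Theorem~\ref{thm:sq} transfers to $\A_{SQ}$ precisely when $\tau' \ge \tau$, while the complementary regime is handled by the elementary inequality above. No new technical ingredients are required beyond those already used in deriving Theorem~\ref{thm:informal-privacy}.
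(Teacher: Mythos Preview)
Your approach is exactly what the paper intends: it states the result only as ``a direct corollary of Theorems~\ref{thm:sq} and~\ref{thm:COMM-2-SQ}'' and gives no further argument, so your reduction via Theorem~\ref{thm:COMM-2-SQ} followed by a case split on whether $\tau' \ge \tau$ is a correct way to fill in the omitted details.

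There is one slip, however. You cannot choose $\delta = \delta(r)$ independent of $\gamma$. If $\delta$ is any fixed positive constant, then for all sufficiently small $\gamma$ one has $\gamma^{1-r} < \delta$, and the error bound $1/2 - \gamma^{1-r} + O(\delta)$ exceeds $1/2$; no choice of $r' \in (0,r)$ can recover a bound of the form $1/2 - \gamma^{1-r'}$ uniformly in $\gamma$. The easy fix is to let $\delta$ depend on $\gamma$, for instance $\delta = \gamma^{1-r}/2$, so that $\A_{SQ}$ has expected error at most $1/2 - \gamma^{1-r}/2$. The proof of Theorem~\ref{thm:sq} already tolerates advantage $C\eta$ for a constant $C = C(r)$ (this is stated explicitly at the start of its proof), so there is no need to pass to a different exponent $r'$. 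The extra factor $\delta = \gamma^{1-r}/2$ that then appears in your second case is only polynomially small in $1/\gamma$ and is absorbed into the $\exp(\Omega(\gamma^{-2r/5}))$ bound, just as you already argue for the factor $2\ell$ in the first case.
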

The lower bound for learning linear models with convex loss can be extended analogously.
\fi
\appendix
\iffull
\printbibliography
\else
\bibliographystyle{ACM-Reference-Format}
\bibliography{vf-allrefs-local,adaptive}
\fi
\end{document}